\definecolor{mylinkcolor}{RGB}{0,0,140}
\definecolor{ForestGreen}{RGB}{34,139,34}
\newcommand{\flowv}{\mathbf{f}}
\newcommand{\edgelap}{\mathbf{L}_e}
\newcommand{\xhdr}[1]{\vspace{0.5mm}\noindent{\textbf{#1.}}\hspace{0.5mm}}
\lstdefinelanguage{Julia}%
  {morekeywords={abstract,break,case,catch,const,continue,do,else,elseif,%
      end,export,false,for,function,immutable,import,importall,if,in,%
      macro,module,otherwise,quote,return,switch,true,try,type,typealias,%
      using,while},%
   sensitive=true,%
   alsoother={\$},%
   morecomment=[l]\#,%
   morecomment=[n]{\#=}{=\#},%
   morestring=[s]{"}{"},%
   morestring=[m]{'}{'},%
}[keywords,comments,strings]%
\bfseries\color{blue},
\def\BibTeX{{\rm B\kern-.05em{\sc i\kern-.025em b}\kern-.08emT\kern-.1667em\lower.7ex\hbox{E}\kern-.125emX}}
\begin{document}

\title{Graph-based Semi-Supervised \& Active Learning for Edge Flows}

\author{Junteng Jia}
\affiliation{\institution{Cornell University}}
\email{jj585@cornell.edu}

\author{Michael T. Schaub}
\affiliation{\institution{Massachusetts Institute of Technology}
\institution{University of Oxford}}
\email{mschaub@mit.edu}

\author{Santiago Segarra}
\affiliation{\institution{Rice University}}
\email{segarra@rice.edu}

\author{Austin R. Benson}
\affiliation{\institution{Cornell University}}
\email{arb@cs.cornell.edu}


\begin{abstract}
We present a graph-based semi-supervised learning (SSL) method for learning edge flows defined on a graph.
Specifically, given flow measurements on a subset of edges, we want to predict the flows on the remaining edges.
To this end, we develop a computational framework that imposes certain constraints on the overall flows, such as (approximate) flow conservation.
These constraints render our approach different from classical graph-based SSL for vertex labels, 
which posits that tightly connected nodes share similar labels and leverages the graph structure 
accordingly to extrapolate from a few vertex labels to the unlabeled vertices.

We derive bounds for our method's reconstruction error and demonstrate its
strong performance on synthetic and real-world flow networks from transportation, physical infrastructure, and the Web.
Furthermore, we provide two active learning algorithms for selecting informative edges on which to measure flow, which has applications for optimal sensor deployment.
The first strategy selects edges to minimize the reconstruction error bound
and works well on flows that are approximately divergence-free.
The second approach clusters the graph and selects bottleneck edges that cross cluster-boundaries,
which works well on flows with global trends.
\end{abstract}

\begin{CCSXML}
<ccs2012>
<concept>
<concept_id>10002950.10003624.10003633.10003644</concept_id>
<concept_desc>Mathematics of computing~Network flows</concept_desc>
<concept_significance>500</concept_significance>
</concept>
<concept>
<concept_id>10002950.10003714.10003715.10003719</concept_id>
<concept_desc>Mathematics of computing~Computations on matrices</concept_desc>
<concept_significance>300</concept_significance>
</concept>
<concept>
<concept_id>10010147.10010257.10010282.10011305</concept_id>
<concept_desc>Computing methodologies~Semi-supervised learning settings</concept_desc>
<concept_significance>500</concept_significance>
</concept>
<concept>
<concept_id>10010147.10010257.10010282.10011304</concept_id>
<concept_desc>Computing methodologies~Active learning settings</concept_desc>
<concept_significance>300</concept_significance>
</concept>
</ccs2012>
\end{CCSXML}

%

\maketitle

\section{Introduction \label{sec:intro}}

\begin{figure}[t!]
\centering
\includegraphics[width=1.00\linewidth]{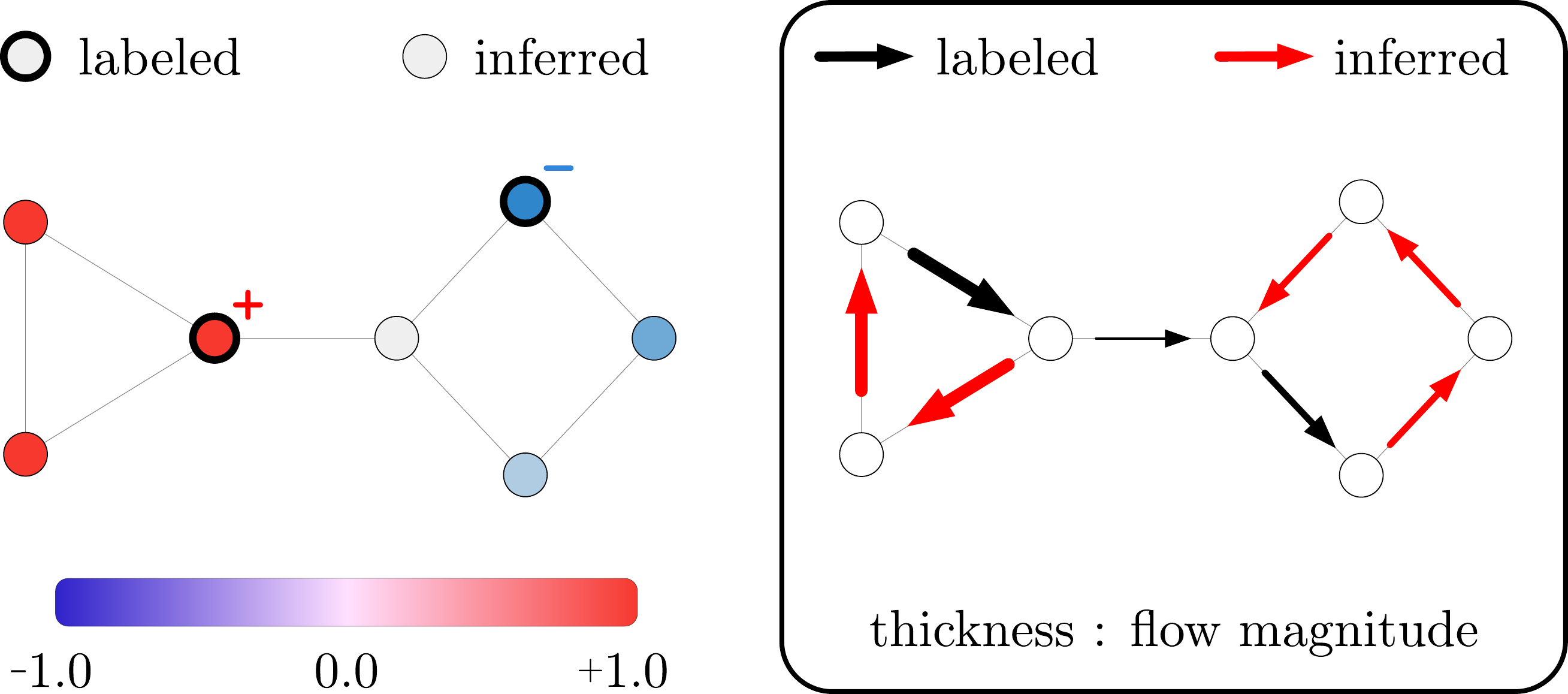}
\caption{Left: classical graph-based semi-supervised learning for vertex labels.
Right: Our framework of graph-based semi-supervised learning for edge flows.
}
\label{fig:intro}
\end{figure}
Semi-supervised learning (SSL) has been widely studied for large-scale data
mining applications, where the labeled data are often difficult, expensive, or
time consuming to obtain~\cite{Zhu_2009,Subramanya_2014}.
SSL utilizes both labeled and unlabeled data to improve prediction accuracy by enforcing a \textit{smoothness} constraint with respect to the intrinsic structure among all data samples.
Graph-based SSL is an important branch of semi-supervised learning.
It encodes the structure of data points with a similarity graph, where each vertex is a data sample and each edge is the similarity between a pair of vertices (\cref{fig:intro}, left).
Such similarity graphs can either be derived from actual relational data or be
constructed from data features using k-nearest-neighbors, $\epsilon$-neighborhoods
or Gaussian Random Fields~\cite{Zhu_2003,Joachims_2003}.
%
%
Graph-based SSL is especially suited for learning problems that are naturally defined on the \emph{vertices} of a graph, including social networks~\cite{Altenburger_2018}, web networks~\cite{Kyng_2015}, and co-purchasing networks~\cite{Gleich_2015}.
%
%

However, in many complex networks, the behavior of interest is a dynamical process on the \emph{edges}~\cite{schaub2014structure}, 
such as a flow of energy, signal, or mass.
%
%
For instance, in transportation networks, we typically monitor the traffic on roads (edges) that connect different intersections (vertices).
Other examples include energy flows in power grids, water flows in water supply networks, and data packets flowing between autonomous systems.
Similar to vertex-based data, edge flow data needs to be collected through dedicated sensors or special protocols and can
be expensive to obtain.
Although graph-theoretical tools like the line-graph~\cite{Godsil_2001} have been proposed to analyze graph-based data from an edge-perspective~\cite{Ahn_2010,evans2010line},  the problem of semi-supervised learning for edge flows has so far received little
attention, despite the large space of applications

%
Here we consider the problem of semi-supervised learning for edge flows for networks with fixed topology.
Given a network with a vertex set $\mathcal{V}$ and edge set $\mathcal{E}$, the (net) edge flows can be considered as real-valued alternating functions
$f\colon \mathcal{V} \times \mathcal{V} \rightarrow \mathbb{R}$, such that:
\begin{align}
f(i,j) =
\begin{cases}
-f(j,i), & \forall\ (i,j) \in \mathcal{E} \\
 0, & \text{otherwise}.
\end{cases}
\label{eq:flow_function}
\end{align}
As illustrated in \cref{fig:intro}, this problem is related to---yet fundamentally different from---SSL in the vertex-space.
Specifically, a key assumption underlying classical vertex-based SSL is that tightly-knit vertices are likely to share similar labels.
This is often referred to as the smoothness or cluster assumption~\cite{Chapelle_2003}.

However, naively translating this notion of smoothness to learn edge flows leads to sub-optimal algorithms.
As we will show in the following sections, applying classical vertex-based SSL to a line-graph~\cite{Godsil_2001}, which encodes the adjacency relationships between \emph{edges}, often produces worse results than simply ignoring the observed edges.
Intuitively, the reason is that smoothness is not the right condition for flow data: unlike a vertex label, each edge flow carries an orientation which is represented by the sign of its numerical flow value.
Enforcing smoothness on the line-graph requires the flow values on adjacent edges to be numerically close, which does \emph{not} reflect any insight into the underlying physical system.

%

To account for the different nature of edge data, we assume different kinds of SSL constraints for edge flows.
Specifically, we focus on flows that are almost conserved or \emph{divergence-free}---the total amount of flow that enters a vertex should approximately equal the flow that leaves.
Given an arbitrary network and a set of labeled edge flows $\mathcal{E}^{\rm L}$, the unlabeled edge flows on $\mathcal{E}^{\rm U}$ are inferred by minimizing a cost function based on the edge Laplacian $\edgelap$ that measures divergence at all vertices.
We provide the perfect recovery condition for strictly divergence-free flows, and derive an upper bound for the reconstruction error when a small perturbation is added.
We further show that this minimization problem can be converted into a linear least-squares problem and thus solved efficiently.
Our method substantially outperforms two competing baselines (including the line-graph approach) as measured by the Pearson correlation coefficient between the inferred edge flows and the observed ground truth on a variety of real-world datasets.

We further consider \emph{active} semi-supervised learning for edge flows, where we aim to select a fraction of edges that is most informative for inferring the flows on the remaining edges.
An important application of this problem is optimal sensor placement, where we want to deploy flow sensors on a limited number of edges such that the reconstructed edge flows are as accurate as possible.
We propose two active learning strategies and demonstrate substantial performance gains over random edge selection.
Finally, we discuss how our methods can be extended to other types of structured edge flows by highlighting connections with algebraic topology.
We summarize our main contributions as follows: 
(1) a semi-supervised learning method for edge flows;
(2) two active learning algorithms for choosing informative edges; and
(3) analysis of real-world data that demonstrate the superiority of our method to alternatives.

\section{Methodology \label{sec:method}}
Given an undirected network with vertex set $\mathcal{V}$, edge set $\mathcal{E}$, and a labeled set of edge flows on
$\mathcal{E}^{\rm L} \subseteq \mathcal{E}$, our goal is to predict the unlabeled edge flows
$\mathcal{E}^{\rm U} \equiv \mathcal{E} \backslash \mathcal{E}^{\rm L}$.
Although our assumption about the data in this problem is different from classical graph-based SSL for vertex labels, 
the associated matrix computations in the two problems have striking similarities. 
In fact, we show in~\cref{sec:topology} that the similarity is mediated by deep connections to algebraic topology.
In this section, we first review classical vertex-based SSL and then show how it relates to our edge-based method.
\Cref{tab:symbols} summarizes notation used throughout the paper.

\xhdr{Background on Graph-based SSL for Vertex Labels}
In the SSL problem for vertex labels, we are given the labels of a subset of vertices $\mathcal{V}^{\rm L}$, and our goal is to find a label assignment of the unlabeled vertices $\mathcal{V}^{\rm U}$ such that the labels vary smoothly across neighboring vertices.
Formally, this notion of smoothness (or the deviation from it, respectively) can be defined via a loss function of the form\footnote{All norms for vectors and matrices in this paper are the 2-norm.}
\begin{align} \textstyle
\|\mathbf{B}^{\intercal} \mathbf{y}\|^{2} = \sum_{(i,j) \in \mathcal{E}} (y_{i} - y_{j})^{2},
\label{eq:loss_smoothness}
\end{align}
where $\mathbf{y}$ is the vector containing vertex labels, and $\mathbf{B} \in \mathbb{R}^{n \times m}$ is the incidence matrix of the network, defined as follows.
Consider the edge set $\mathcal{E} = \{\mathcal{E}_{1}, \ldots, \mathcal{E}_{r}, \ldots, \mathcal{E}_{m}\}$ and, without loss of generality, choose a reference orientation for every edge such that it points from the vertex with the smaller index to the vertex with the larger index.
Then the incidence matrix $\mathbf{B}$ is defined as 
\begin{align}
B_{kr} =
\begin{cases}
 1, & \mathcal{E}_{r} \equiv (i,j),\ k = i,\ i < j \\
-1, & \mathcal{E}_{r} \equiv (i,j),\ k = j,\ i < j \\
 0, & \text{otherwise}.
\end{cases}
\label{eq:incidence_matrix}
\end{align}
The loss function in \cref{eq:loss_smoothness} is the the sum-of-squares label difference 
between all connected vertices.
The loss can be written compactly as $\|\mathbf{B}^\intercal\mathbf{y}\|^2 = \mathbf{y}^\intercal\mathbf{L}\mathbf{y}$ in terms of the graph Laplacian $\mathbf{L} = \mathbf{B} \mathbf{B}^{\intercal}$.

In vertex-based SSL, unknown vertex labels are inferred by minimizing the quadratic form $\mathbf{y}^{\intercal} \mathbf{L} \mathbf{y}$
with respect to $\mathbf{y}$ while keeping the labeled vertices fixed.%
\footnote{This is the formulation of Zhu, Ghahramani, and Lafferty~\cite{Zhu_2003}. There are other
  graph-based SSL methods~\cite{Subramanya_2014}; however, most of them employ a similar loss-function based on variants of the graph Laplacian $\mathbf{L}$.}
Using $\hat{y}_i$ to denote an observed label on vertex $i$, the optimization problem is:
\begin{align}
\mathbf{y}^{*} = \arg \min_{\mathbf{y}} \|\mathbf{B}^{\intercal} \mathbf{y}\|^{2} \qquad \text{s.t.} \quad \ y_{i} = \hat{y}_{i}, \, \forall \mathcal{V}_{i} \in \mathcal{V}^{\rm L}.
\label{eq:objective_smoothness}
\end{align}
For connected graphs with more edges than vertices ($m > n$),
\cref{eq:objective_smoothness} has a unique solution provided at least one vertex is labeled.
%

\subsection{Graph-Based SSL for Edge Flows}
We now consider the SSL problem for edge flows.
The edge flows over a network can be represented with a vector $\flowv$, where
$\mathrm{f}_{r} > 0$ if the flow orientation on edge $r$ aligns with its
reference orientation and $\mathrm{f}_{r} < 0$ otherwise.
In this sense, we are only accounting for the \emph{net flow} along an edge.
We denote the ground truth (measured) edge flows in the network as $\hat{\flowv}$.
To impose a flow conservation assumption for edge flows, 
we consider the divergence at each vertex, 
which is the sum of outgoing flows minus the sum of incoming flows at a vertex.
For arbitrary edge flows $\flowv$, the divergence on a vertex $i$ is
\[
(\mathbf{B} \mathbf{f})_{i} = \sum_{\mathcal{E}_{r} \in \mathcal{E} \;:\; \mathcal{E}_{r} \equiv (i,j), i < j} \mathrm{f}_{r} -
\sum_{\mathcal{E}_{r} \in \mathcal{E} \;:\; \mathcal{E}_{r} \equiv (j,i), j < i} \mathrm{f}_{r}.
\]
To create a loss function for edge flows that enforces a notion of flow-conservation,
we use the sum-of-squares vertex divergence:
\begin{align}
\|\mathbf{B} \flowv\|^{2} = \flowv^{\intercal} \mathbf{B}^{\intercal}\mathbf{B} \flowv = \flowv^{\intercal} \edgelap \flowv.
\label{eq:loss_divergence_free}
\end{align}

Here $\edgelap = \mathbf{B}^{\intercal} \mathbf{B}$ is the so-called edge Laplacian matrix.
Interestingly, the loss function for penalizing divergence contains the transpose of the incidence matrix $\mathbf{B}$, which appeared in the measure of smoothness in the vertex-based problem [cf.\ \cref{eq:loss_smoothness}]. 
However, unlike the case for smooth vertex labels, requiring $\flowv^{\intercal}\edgelap \flowv = 0$ is actually under-constrained, i.e., even when more than one edge is labeled, many different divergence-free edge-flow assignments may exist that induce zero loss.
We thus propose to regularize the problem and solve the following constrained optimization problem:
\begin{align}
\flowv^{*} = \arg \min_{\flowv} \|\mathbf{B} \flowv\|^{2} + \lambda^{2} \cdot \|\flowv\|^{2} 
               \qquad \text{s.t.} \quad \mathrm{f}_{r} = \hat{\mathrm{f}}_{r}, \, \forall \mathcal{E}_{r} \in \mathcal{E}^{\rm L}.
\label{eq:objective_divergence_free}
\end{align}
The first term in the objective function is the loss, while the second term is a regularizer that guarantees a unique optimal solution.

\begin{table}[t]
\caption{Summary of notation used throughout the paper.}
\begin{tabular}{r l}
\toprule
\textbf{Symbol} & \textbf{Description} \\
\hline
$n \in \mathbb{N}$  & $\lvert \mathcal{V}\rvert$ = number of vertices \\
$m \in \mathbb{N}$  & $\lvert \mathcal{E} \rvert$ = number of edges \\
$o \in \mathbb{N}$  & $\lvert \mathcal{T} \vert$ = number of triangles \\
$m^{\rm L} \in \mathbb{N}$ & $\lvert \mathcal{E}^{\rm L} \rvert$ = number of labeled edges \\
$m^{\rm U} \in \mathbb{N}$ & $\lvert \mathcal{E}^{\rm U} \rvert$ = number of unlabeled edges \\
$c \in \mathbb{N}$  & $m-n+1$ = number of independent cycles \\
$i,j,k \in \mathbb{N}$ & vertex index \\
$r,s,t \in \mathbb{N}$ & edge index \\
$u \in \mathbb{N}$ & triangle index \\
$\alpha,\beta,\gamma \in \mathbb{N}$ & index for spectral coefficient \\
\midrule
$\mathbf{y}, \hat{\mathbf{y}} \in \mathbb{R}^{n}$ & vertex labels, ground truth vertex labels \\
$\flowv, \hat{\flowv} \in \mathbb{R}^{m}$ & edge flows, ground truth edge flows \\
$\mathbf{w} \in \mathbb{R}^{o}$ & function defined on triangles \\
\midrule
$\mathbf{B} \in \mathbb{R}^{n \times m}$ & node-edge incidence matrix (see \cref{eq:incidence_matrix}) \\
$\mathbf{C} \in \mathbb{R}^{m \times o}$ & edge-triangle curl matrix  (see \cref{eq:curl_matrix}) \\
$\mathbf{L} \in \mathbb{R}^{n \times n}$ & Laplacian $\mathbf{L} = \mathbf{B} \mathbf{B}^{\intercal}$ \\
$\edgelap \in \mathbb{R}^{m \times m}$ & edge Laplacian $\edgelap = \mathbf{B}^{\intercal} \mathbf{B}$ \\
\bottomrule
\end{tabular}
\label{tab:symbols}
\end{table}

\xhdr{Computation} The equality constraints in \cref{eq:objective_divergence_free} can be
eliminated by reducing the number of free variables.
Let $\flowv^{0}$ be a trivial feasible point for \cref{eq:objective_divergence_free}
where $\mathrm{f}^{0}_{r} = \hat{\mathrm{f}}_{r}$ if $r \in \mathcal{E}^{\rm L}$
and $\mathrm{f}^{0}_{r} = 0$ otherwise.
Moreover, denote the set of indices for unlabeled edges as $\mathcal{E}^{\rm U} = \{\mathcal{E}_{1}^{\rm U}, \mathcal{E}_{2}^{\rm U}, \ldots, \mathcal{E}_{m^{\rm U}}^{\rm U}\}$. We define the expansion operator $\mathbf{\Phi}$ as a linear map from $\mathbb{R}^{m^{\rm U}}$ to $\mathbb{R}^{m}$ given by
$ \Phi_{rs} = 1$ if $\mathcal{E}_{r} = \mathcal{E}_{s}^{\rm U}$ and $0$ otherwise.
Let $\flowv^{\rm U} \in \mathbb{R}^{m^{\rm U}}$ be the edge flows on the unlabeled edges.
Any feasible point for \cref{eq:objective_divergence_free} can be written as $\flowv^{0} + \mathbf{\Phi} \flowv^{\rm U}$,
and the original problem can be converted to a linear least-squares problem:
\begin{align}
\flowv^{\rm U*} = \arg \min_{\flowv^{\rm U}}
\left\|
\begin{bmatrix}
\mathbf{B} \mathbf{\Phi} \\
\lambda \cdot \mathbf{I} \\
\end{bmatrix}
\flowv^{\rm U}
-
\begin{bmatrix}
-\mathbf{B} \flowv^{0}\\
0
\end{bmatrix}
\right\|^{2}.
\label{eq:least_square}
\end{align}
Typically, $\mathbf{B}$ is a large sparse matrix.
Thus, the least-squares problem in \cref{eq:least_square} can be solved with iterative methods such as LSQR~\cite{Paige_1982} or LSMR~\cite{Fong_2011}, which is guaranteed to converge in $m^{\rm U}$ iterations.
Those iterative solvers use sparse matrix-vector multiplication as subroutine, with $\mathcal{O}(m)$ computational cost per iteration.
By choosing $\lambda > 0$, \cref{eq:least_square} can be made
well-conditioned, and the iterative methods will only take a small number
of iterations to converge.
\subsection{Spectral Graph Theory Interpretations \label{subsec:theory}}
\begin{figure}[t]
\centering
\includegraphics[width=1.00\linewidth]{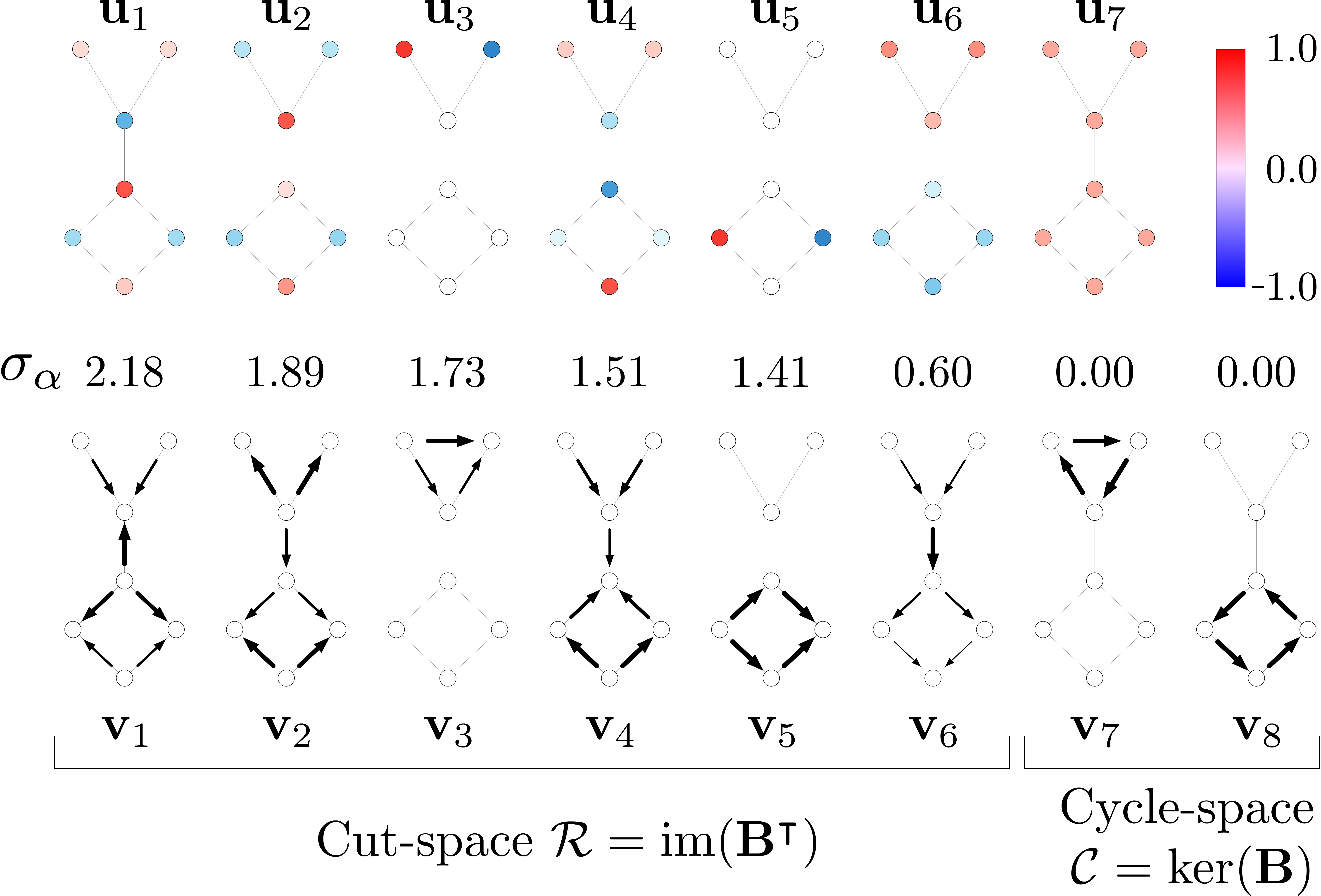}
\vspace{-4.75mm}
\caption{Singular vectors for the incidence matrix $\mathbf{B}$ of an example graph.
Top: the left singular vectors form a basis for vertex labels. Numerical values are encoded by color of the vertices.
Middle: singular values represent the ``frequencies'' of left singular vectors or the divergences of right singular vectors.
Bottom: right singular vectors form a basis for edge flows, where the arrow points to the flow direction and the edge-width encodes the magnitude of the flow.
}
\label{fig:spectral}
\end{figure}
We first briefly review graph signal processing in the
vertex-space before introducing similar tools to deal with edge flows.
The eigenvectors of the Laplacian matrix have been widely used in graph signal processing for vertex labels, since the corresponding eigenvalues carry a notion of frequency that provides a sound mathematical and intuitive basis for
analyzing functions on vertices~\cite{Shuman_2013,Ortega_2018}.
The spectral decomposition of the graph Laplacian matrix is
$\mathbf{L} = \mathbf{U}\ \mathbf{\Lambda}\ \mathbf{U}^{\intercal}$.
Because $\mathbf{L} = \mathbf{B} \mathbf{B}^{\intercal}$, the orthonormal basis $\mathbf{U} \in \mathbb{R}^{n \times n}$ for vertex labels is formed by the \emph{left} singular vectors of the incidence matrix:
$\mathbf{B} = \mathbf{U}\ \mathbf{\Sigma}\ \mathbf{V}^{\intercal}$,
where $\mathbf{\Sigma} \in \mathbb{R}^{n \times m}$ is the diagonal matrix of ordered singular values with $m-n$ columns of zero-padding on the right, and the right singular vectors $\mathbf{V} \in \mathbb{R}^{m \times m}$ is an orthonormal basis for edge flows.
To simplify our discussion, we will say that the basis vectors in the last $m-n$ columns of $\mathbf{V}$ also have singular value $0$.

The divergence-minimizing objective in \cref{eq:objective_divergence_free} can be
rewritten in terms of the right singular vectors of $\mathbf{B}$, thus providing a formal connection between the vertex-based and the edge-based SSL problem.
Let $\mathbf{p} = \mathbf{V}^\intercal\flowv \in \mathbb{R}^{m}$ represent the \emph{spectral coefficients} of
$\flowv$ expressed in terms of the basis $\mathbf{V}$.
Then, we can rewrite
\cref{eq:objective_divergence_free} as
\begin{align}
\flowv^{*} &= \mathbf{V} \cdot \arg \min_{\mathbf{p}}\ (\mathbf{V} \mathbf{p})^{\intercal} \mathbf{B}^{\intercal} \mathbf{B}\ (\mathbf{V} \mathbf{p}) + \lambda^{2} \cdot (\mathbf{V} \mathbf{p})^{\intercal} (\mathbf{V} \mathbf{p}) \nonumber \\ 
&= \mathbf{V} \cdot \arg \min_{\mathbf{p}} \mathbf{p}^{\intercal} \left(\mathbf{\Sigma}^{\intercal} \mathbf{\Sigma} + \lambda^{2} \cdot \mathbf{I}\right) \mathbf{p} \nonumber \\ 
&= \mathbf{V} \cdot \arg \min_{\mathbf{p}} \lambda^{2} \cdot \sum_{\alpha} \frac{\sigma_{\alpha}^{2}+\lambda^{2}}{\lambda^{2}} p_{\alpha}^{2} \nonumber \\ 
&\hspace{1.35cm} \text{s.t.} \quad (\mathbf{V} \mathbf{p})_{r}=\hat{\mathrm{f}}_{r}, \, \forall \mathcal{E}_{r} \in \mathcal{E}^{\rm L},
\label{eq:objective_spectral}
\end{align}
which minimizes the weighted sum-of-square of the spectral coefficients under equality constraints for measured edge flows.

\xhdr{Signal smoothness, cut-space, and cycle space}
The connection between the vertex and edge-based problem is in fact not just a formal relationship, but can be given a clear (physical) interpretation.
By construction $\mathbf{V}$ is a complete orthonormal basis for the space of edge flows (here identified with $\mathbb{R}^m$).
This space can be decomposed into two orthogonal subspaces (see also~\cref{fig:spectral}).

The first subspace is the \emph{cut-space} $\mathcal R = \text{im}(\mathbf{B}^\intercal)$~\cite{Godsil_2001}, spanned by the singular vectors $\mathbf{V}_{\mathcal{R}}$ associated with nonzero singular values.
The space $\mathcal{R}$ is also called the space of gradient flows, since any vector may be written as $\mathbf{B}^\intercal \mathbf{y}$,
where $\mathbf{y}$ is a vector of vertex scalar potentials that induce a gradient flow.
The second subspace is the \emph{cycle-space} $\mathcal C = \text{ker}(\mathbf{B})$~\cite{Godsil_2001}, spanned by the remaining right singular vectors $\mathbf{V}_{\mathcal{C}}$ associated with zero singular values.
Note that any vector $\mathbf{f}\in \mathcal{C}$ corresponds to a circulation of flow, and will induce zero cost in our loss function \cref{eq:loss_divergence_free}.
In fact, for a connected graph, the number of right singular vectors with zero singular values equals $c = m - n + 1$, which is the number of independent cycles in the graph.
We denote the spectral coefficients for basis vectors in these two spaces as $\mathbf{p}_{\mathcal{R}} \in \mathbb{R}^{m-c}$ and $\mathbf{p}_{\mathcal{C}} \in \mathbb{R}^{c}$.

Let $\mathbf{u}_{\alpha}, \sigma_{\alpha}, \mathbf{v}_{\alpha}$ denote a triple of a left
singular vector, singular value, and right singular vector.
The singular values
$\mathbf{u}_{\alpha}^{\intercal}\ \mathbf{L}\ \mathbf{u}_{\alpha} = \sigma_{\alpha}^{2}$ 
provide a notion of ``unsmoothness'' of basis vector
$\mathbf{u}_{\alpha}$ representing vertex labels, while
$\mathbf{v}_{\alpha}^{\intercal}\ \edgelap\ \mathbf{v}_{\alpha} =
\sigma_{\alpha}^{2}$ gives the sum-of-squares divergence of basis vector
$\mathbf{v}_{\alpha}$ representing edge flows.
As an example, \cref{fig:spectral} displays the left and right singular vectors of a small graph.
The two singular basis vectors $\mathbf{v}_{7}$ and $\mathbf{v}_{8}$ associated with zero singular values correspond to cyclic edge flows in the network.
The remaining flows $\mathbf{v}_{i}$, $i=1,\ldots, 6$---corresponding to non-zero singular values---all have a non-zero divergence.
Note also how the left singular vectors $\mathbf{u}_{i}$ associated with non-zero singular values give rise to the right singular vectors $\mathbf{v}_{i} = 1/\sigma_{i} \cdot \mathbf{B}^\intercal\mathbf{u}_{i}$ for $i = 1,\ldots 6$. As the singular vectors $\mathbf{u}_{i}$ can be interpreted as potential on the nodes, this highlights that the cut-space is indeed equivalent to the space of gradient flows (note that $\mathbf{u}_{7}$ induces no gradient).

\subsection{Exact and perturbed recovery}
From the above discussion, we can derive an exact recovery condition for the edge flows in the divergence-free setting.
\begin{lemma}
  Assume the ground truth flows are divergence-free.
  Then as $\lambda \rightarrow 0$, the solution of \cref{eq:objective_spectral} 
  can exactly recover the ground truth from some labeled edge set $\mathcal{E}^{\rm L}$ with cardinality $c=m-n+1$.
\label{thm:exact_recovery}
\end{lemma}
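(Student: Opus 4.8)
The plan is to exploit that the divergence-free assumption forces the ground truth into the cycle space $\mathcal{C} = \mathrm{ker}(\mathbf{B})$, which is exactly $c = m-n+1$ dimensional for a connected graph, and then to choose the $c$ labeled edges so that the flow values there pin down the unique element of $\mathcal{C}$ consistent with the data. First I would record the two facts that drive everything: (i) since $\mathbf{B}\hat{\flowv} = \mathbf{0}$, the ground truth has zero loss $\|\mathbf{B}\hat{\flowv}\|^{2} = 0$ and is feasible for \cref{eq:objective_divergence_free}, so the minimal loss over the feasible set is $0$; and (ii) any flow with zero loss lies in $\mathcal{C}$. Thus the set of loss-minimizing feasible points is precisely $\{\flowv \in \mathcal{C} : \mathrm{f}_{r} = \hat{\mathrm{f}}_{r}\ \forall\, \mathcal{E}_{r} \in \mathcal{E}^{\rm L}\}$, and exact recovery amounts to making this set the singleton $\{\hat{\flowv}\}$.

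Next I would exhibit a labeled set of size $c$ for which this holds. Let $\mathbf{V}_{\mathcal{C}} \in \mathbb{R}^{m\times c}$ be a basis of $\mathcal{C}$, so that every cycle flow equals $\mathbf{V}_{\mathcal{C}}\mathbf{p}_{\mathcal{C}}$; the constraint map sends $\mathbf{p}_{\mathcal{C}}$ to the subvector of $\mathbf{V}_{\mathcal{C}}\mathbf{p}_{\mathcal{C}}$ indexed by $\mathcal{E}^{\rm L}$, i.e.\ to $(\mathbf{V}_{\mathcal{C}})_{\mathcal{E}^{\rm L},:}\,\mathbf{p}_{\mathcal{C}}$. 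Since $\mathbf{V}_{\mathcal{C}}$ has full column rank $c$, it possesses $c$ rows forming a nonsingular $c\times c$ submatrix; choosing $\mathcal{E}^{\rm L}$ to index those rows makes the constraint map a bijection $\mathcal{C}\to\mathbb{R}^{c}$, so the only cycle flow matching $\hat{\flowv}$ on $\mathcal{E}^{\rm L}$ is $\hat{\flowv}$ itself. A fully constructive choice is the set of $c$ non-tree edges (chords) of any spanning tree: the associated fundamental cycles form a basis of $\mathcal{C}$ in which each chord carries flow equal to exactly one coordinate of $\mathbf{p}_{\mathcal{C}}$, so the relevant submatrix is, up to sign, the identity.

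It remains to pass to the limit $\lambda\to 0$. For each $\lambda>0$ the problem is strictly convex and has a unique solution $\flowv^{*}(\lambda)$; feasibility of $\hat{\flowv}$ gives the objective bound $\|\mathbf{B}\flowv^{*}(\lambda)\|^{2} + \lambda^{2}\|\flowv^{*}(\lambda)\|^{2} \le \lambda^{2}\|\hat{\flowv}\|^{2}$. This simultaneously yields $\|\flowv^{*}(\lambda)\|\le\|\hat{\flowv}\|$ (so the family is bounded) and $\|\mathbf{B}\flowv^{*}(\lambda)\|^{2} \le \lambda^{2}\|\hat{\flowv}\|^{2} \to 0$. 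Hence every limit point $\bar{\flowv}$ of $\flowv^{*}(\lambda)$ as $\lambda\to 0$ is feasible, lies in $\mathcal{C}$, and---by the singleton property established above---equals $\hat{\flowv}$; boundedness then forces $\flowv^{*}(\lambda)\to\hat{\flowv}$.

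The individual steps are short, so the only real subtlety---and the part I would state most carefully---is the interaction between the regularizer and the limit: one must confirm that $\lambda\to 0$ selects the (here unique) zero-loss feasible point rather than some other loss-minimizer, which the boundedness estimate above settles cleanly without needing the spectral expansion \cref{eq:objective_spectral}. A secondary point worth a sentence is the standing assumption that the graph is connected, since that is what guarantees $\dim\mathcal{C} = c = m-n+1$ and hence that a labeled set of exactly this cardinality suffices.
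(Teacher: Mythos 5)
Your proof is correct, and its skeleton matches the paper's: both arguments place the divergence-free ground truth in the cycle space, argue that the $\lambda \rightarrow 0$ limit forces the solution into that space, and obtain uniqueness by choosing $\mathcal{E}^{\rm L}$ to index $c$ linearly independent rows of $\mathbf{V}_{\mathcal{C}}$. Where you genuinely differ is in how the limit is handled. The paper stays in the spectral form \cref{eq:objective_spectral} and disposes of the limit informally: as $\lambda \rightarrow 0$ the weights $(\sigma_{\alpha}^{2}+\lambda^{2})/\lambda^{2}$ on the cut-space coefficients blow up and are said to ``force'' those coefficients to zero. Your argument instead works directly with \cref{eq:objective_divergence_free}: comparing the optimal objective against the feasible point $\hat{\flowv}$ gives $\|\mathbf{B}\flowv^{*}(\lambda)\|^{2} + \lambda^{2}\|\flowv^{*}(\lambda)\|^{2} \leq \lambda^{2}\|\hat{\flowv}\|^{2}$, hence uniform boundedness of the minimizers and $\|\mathbf{B}\flowv^{*}(\lambda)\| \rightarrow 0$, after which compactness plus the singleton property identify every limit point with $\hat{\flowv}$. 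This buys rigor precisely at the step the paper leaves heuristic---it proves the regularized minimizers themselves converge to $\hat{\flowv}$, rather than appealing to an ``infinite weight'' limiting problem---and it does so without any SVD machinery. Your spanning-tree remark is also a constructive addition absent from the paper: taking $\mathcal{E}^{\rm L}$ to be the $c$ chords of a spanning tree gives an explicit labeled set whose constraint submatrix, in the fundamental-cycle basis, is a signed identity, whereas the paper only asserts that $c$ linearly independent rows of $\mathbf{V}_{\mathcal{C}}$ exist. Both proofs rely on connectivity to guarantee $\dim \mathcal{C} = m-n+1$; you are right to flag that explicitly, since the paper states it only in the surrounding text.
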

\begin{proof}
If the ground truth edge flows are divergence free (cyclic), 
the spectral coefficients of the basis vectors with non-zero singular values must be zero.
Recall that $\mathbf{p}_{\mathcal{C}} \in \mathbb{R}^{c}$ are the spectral coefficients of a basis $\mathbf{V}_{\mathcal{C}}$
in the cycle-space, then the ground truth edge flows can be written as $\hat{\flowv} = \mathbf{V}_{\mathcal{C}} \mathbf{p}_{\mathcal{C}}$
(the singular vectors $\mathbf{V}_\mathcal{C}$ that form the basis of the the cycle space are not unique as the singular values are 
``degenerate''; any orthogonal transformation is also valid).
On the other hand, in the limit $\lambda \rightarrow 0$, the spectral
coefficients of basis vectors with non-zero singular values have infinite
weights and are forced to zero [cf. \cref{eq:objective_spectral}].
Therefore, by choosing the set of labeled edges corresponding to $c=m-n+1$
linearly independent rows from $\mathbf{V}_{\mathcal{C}}$, the ground truth $\hat{\flowv}$ is the unique optimal solution.
\end{proof}
Furthermore, when a perturbation is added to divergence-free edge flows $\flowv$, the reconstruction error can be bounded as follows.
\begin{theorem}
  Let $\mathbf{V}_{\mathcal{C}}^{\rm L}$ denote $c$ linearly independent rows of the $\mathbf{V}_{\mathcal{C}}$ that correspond to labeled edges. 
  If the divergence-free edge flows $\flowv$ are perturbed by $\delta$, then as $\lambda \rightarrow 0$, the reconstruction error of the proposed algorithm is bounded by
  $[\sigma_{\rm min}^{-1}(\mathbf{V}_{\mathcal{C}}^{\rm L}) + 1] \cdot \|\mathbf{\delta}\|$.
\label{lma:reconstruction_error}
\end{theorem}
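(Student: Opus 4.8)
The plan is to make the $\lambda \to 0$ limit explicit and then reduce the statement to a single matrix-norm estimate. Write the perturbed ground truth as $\flowv + \delta$, where $\flowv$ is the divergence-free signal and $\delta$ the perturbation, and note that the equality constraints in \cref{eq:objective_divergence_free} are enforced for \emph{every} $\lambda$, so the recovered flow $\flowv^{*}$ always agrees with $\flowv + \delta$ on the labeled edges. First I would argue, exactly as in the proof of \cref{thm:exact_recovery}, that as $\lambda \to 0$ the spectral coefficients associated with nonzero singular values are driven to zero, so the minimizer lies in the cycle space: $\flowv^{*} = \mathbf{V}_{\mathcal{C}} \mathbf{p}_{\mathcal{C}}^{*}$ for some $\mathbf{p}_{\mathcal{C}}^{*} \in \mathbb{R}^{c}$.

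Next I would solve for $\mathbf{p}_{\mathcal{C}}^{*}$ in closed form. Restricting the identity $\flowv^{*} = \mathbf{V}_{\mathcal{C}} \mathbf{p}_{\mathcal{C}}^{*}$ to the labeled edges and using the constraint gives $\mathbf{V}_{\mathcal{C}}^{\rm L} \mathbf{p}_{\mathcal{C}}^{*} = (\flowv + \delta)^{\rm L}$, where the superscript ${\rm L}$ denotes restriction to the labeled coordinates. Since $\mathbf{V}_{\mathcal{C}}^{\rm L}$ is a $c \times c$ matrix with linearly independent rows it is invertible, so $\mathbf{p}_{\mathcal{C}}^{*} = (\mathbf{V}_{\mathcal{C}}^{\rm L})^{-1}(\flowv^{\rm L} + \delta^{\rm L})$. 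Because the unperturbed $\flowv$ is itself divergence-free it lies in the cycle space, so $\flowv = \mathbf{V}_{\mathcal{C}} \mathbf{p}_{\mathcal{C}}$ and $\flowv^{\rm L} = \mathbf{V}_{\mathcal{C}}^{\rm L} \mathbf{p}_{\mathcal{C}}$; substituting cancels the unperturbed part and yields the clean expression $\flowv^{*} = \flowv + \mathbf{V}_{\mathcal{C}} (\mathbf{V}_{\mathcal{C}}^{\rm L})^{-1} \delta^{\rm L}$.

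Finally I would bound the reconstruction error, measured against the true perturbed flow $\flowv + \delta$. Subtracting gives $\flowv^{*} - (\flowv + \delta) = \mathbf{V}_{\mathcal{C}} (\mathbf{V}_{\mathcal{C}}^{\rm L})^{-1} \delta^{\rm L} - \delta$, and the triangle inequality splits the norm into two terms. For the first term, the columns of $\mathbf{V}_{\mathcal{C}}$ are orthonormal, so left-multiplication by $\mathbf{V}_{\mathcal{C}}$ preserves the norm; then $\|(\mathbf{V}_{\mathcal{C}}^{\rm L})^{-1} \delta^{\rm L}\| \le \|(\mathbf{V}_{\mathcal{C}}^{\rm L})^{-1}\| \cdot \|\delta^{\rm L}\| = \sigma_{\rm min}^{-1}(\mathbf{V}_{\mathcal{C}}^{\rm L}) \cdot \|\delta^{\rm L}\|$, and since restricting to a coordinate subset can only shrink a vector, $\|\delta^{\rm L}\| \le \|\delta\|$. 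The second term is simply $\|\delta\|$. Adding the two contributions gives $[\sigma_{\rm min}^{-1}(\mathbf{V}_{\mathcal{C}}^{\rm L}) + 1] \cdot \|\delta\|$, as claimed.

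I expect the only real obstacle to be the first step: rigorously justifying the $\lambda \to 0$ limit. One must confirm both that a feasible point lying entirely in the cycle space exists---guaranteed precisely because $\mathbf{V}_{\mathcal{C}}^{\rm L}$ has full row rank, so the system $\mathbf{V}_{\mathcal{C}}^{\rm L} \mathbf{p}_{\mathcal{C}} = (\flowv + \delta)^{\rm L}$ is solvable---and that this point is the one selected in the limit, since it achieves zero divergence loss while the penalized nonzero-frequency coefficients are suppressed. Once the limiting minimizer is pinned down to $\flowv^{*} = \mathbf{V}_{\mathcal{C}} (\mathbf{V}_{\mathcal{C}}^{\rm L})^{-1}(\flowv + \delta)^{\rm L}$, the remaining estimate is routine.
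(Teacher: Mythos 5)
Your proposal is correct and follows essentially the same route as the paper: the same closed-form reconstruction $\mathbf{V}_{\mathcal{C}} (\mathbf{V}_{\mathcal{C}}^{\rm L})^{-1}(\flowv^{\rm L} + \delta^{\rm L})$, the same cancellation of the unperturbed divergence-free part, and the same chain of triangle inequality, orthonormality of $\mathbf{V}_{\mathcal{C}}$, $\|\delta^{\rm L}\| \le \|\delta\|$, and the identity $\|(\mathbf{V}_{\mathcal{C}}^{\rm L})^{-1}\| = \sigma_{\rm min}^{-1}(\mathbf{V}_{\mathcal{C}}^{\rm L})$. If anything, you are more explicit than the paper in justifying the reconstruction formula (solving $\mathbf{V}_{\mathcal{C}}^{\rm L}\mathbf{p}_{\mathcal{C}} = (\flowv+\delta)^{\rm L}$ and pinning down the $\lambda \to 0$ limit), where the paper simply asserts it by appeal to its exact-recovery lemma.
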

\begin{proof}
The ground truth edge flows can be written as,
\begin{align}
\hat{\flowv} = \flowv + \mathbf{\delta} =
\begin{bmatrix}
\flowv^{\rm L} \\
\flowv^{\rm U} \\
\end{bmatrix}
+
\begin{bmatrix}
\mathbf{\delta}^{\rm L} \\
\mathbf{\delta}^{\rm U} \\
\end{bmatrix},
\end{align}
where $\flowv^{\rm L}$, $\flowv^{\rm U}$ are the divergence-free edge flows on labeled and unlabeled edges, while $\mathbf{\delta}^{\rm L}$, $\mathbf{\delta}^{\rm U}$ are the corresponding perturbations.
Further, the reconstructed edge flows from \cref{eq:objective_spectral} are given by $\mathbf{V}_{\mathcal{C}} (\mathbf{V}_{\mathcal{C}}^{\rm L})^{-1}(\flowv^{\rm L} + \delta^{\rm L})$.
Therefore, we can bound the norm of the reconstruction error as follows:
\begin{align}
&&\|\mathbf{V}_{\mathcal{C}} (\mathbf{V}_{\mathcal{C}}^{\rm L})^{-1} (\flowv^{\rm L} + \mathbf{\delta}^{\rm L}) - (\flowv + \mathbf{\delta})\| 
&= \|\mathbf{V}_{\mathcal{C}} (\mathbf{V}_{\mathcal{C}}^{\rm L})^{-1} \mathbf{\delta}^{\rm L} - \mathbf{\delta}\| \nonumber \\
&&\leq \|\mathbf{V}_{\mathcal{C}} (\mathbf{V}_{\mathcal{C}}^{\rm L})^{-1} \mathbf{\delta}^{\rm L}\| + \|\mathbf{\delta}\| 
&= \|(\mathbf{V}_{\mathcal{C}}^{\rm L})^{-1} \mathbf{\delta}^{\rm L}\| + \|\mathbf{\delta}\| \nonumber \\
&&\leq \|(\mathbf{V}_{\mathcal{C}}^{\rm L})^{-1}\| \cdot \|\mathbf{\delta}^{\rm L}\| + \|\mathbf{\delta}\| 
&\leq [\|(\mathbf{V}_{\mathcal{C}}^{\rm L})^{-1}\| + 1] \cdot \|\mathbf{\delta}\|.
\label{eq:error_bound}
\end{align}

The first equality in \cref{eq:error_bound} comes from \cref{thm:exact_recovery}, and the second equality is due to the orthonormal columns of $\mathbf{V}_{\mathcal{C}}$.
Finally, the norm of a matrix equals its largest singular value,
and the singular values of the matrix inverse are the reciprocals of the singular
values of the original matrix.
Therefore, we can rewrite \cref{eq:error_bound} as follows
\begin{align}
[\|(\mathbf{V}_{\mathcal{C}}^{\rm L})^{-1}\| + 1] \cdot \|\mathbf{\delta}\| &= [\sigma_{\rm max}((\mathbf{V}_{\mathcal{C}}^{\rm L})^{-1}) + 1] \cdot \|\mathbf{\delta}\| \nonumber \\
&= [\sigma_{\rm min}^{-1}(\mathbf{V}_{\mathcal{C}}^{\rm L}) + 1] \cdot \|\mathbf{\delta}\|.
\end{align}
\end{proof}


\section{Semi-Supervised Learning Results \label{sec:ssl_result}}
Having discussed the theory and computations underpinning our method, we now examine
its application on a collection of networks with synthetic and real-world edge flows.
As our method is based on a notion of divergence-free edge flows, naturally we find
the most accurate edge flow estimates when this assumption approximately
holds.
For experiments in this section, the labeled sets of edges are chosen uniformly at random.
In \cref{sec:acl}, we provide active learning algorithms for selecting \emph{where} to measure.

\subsection{Learning Synthetic Edge Flows \label{subsec:synthetic}}
\begin{figure}[t]
\begin{subfigure}{0.53\linewidth}
\includegraphics[width=1.0\linewidth]{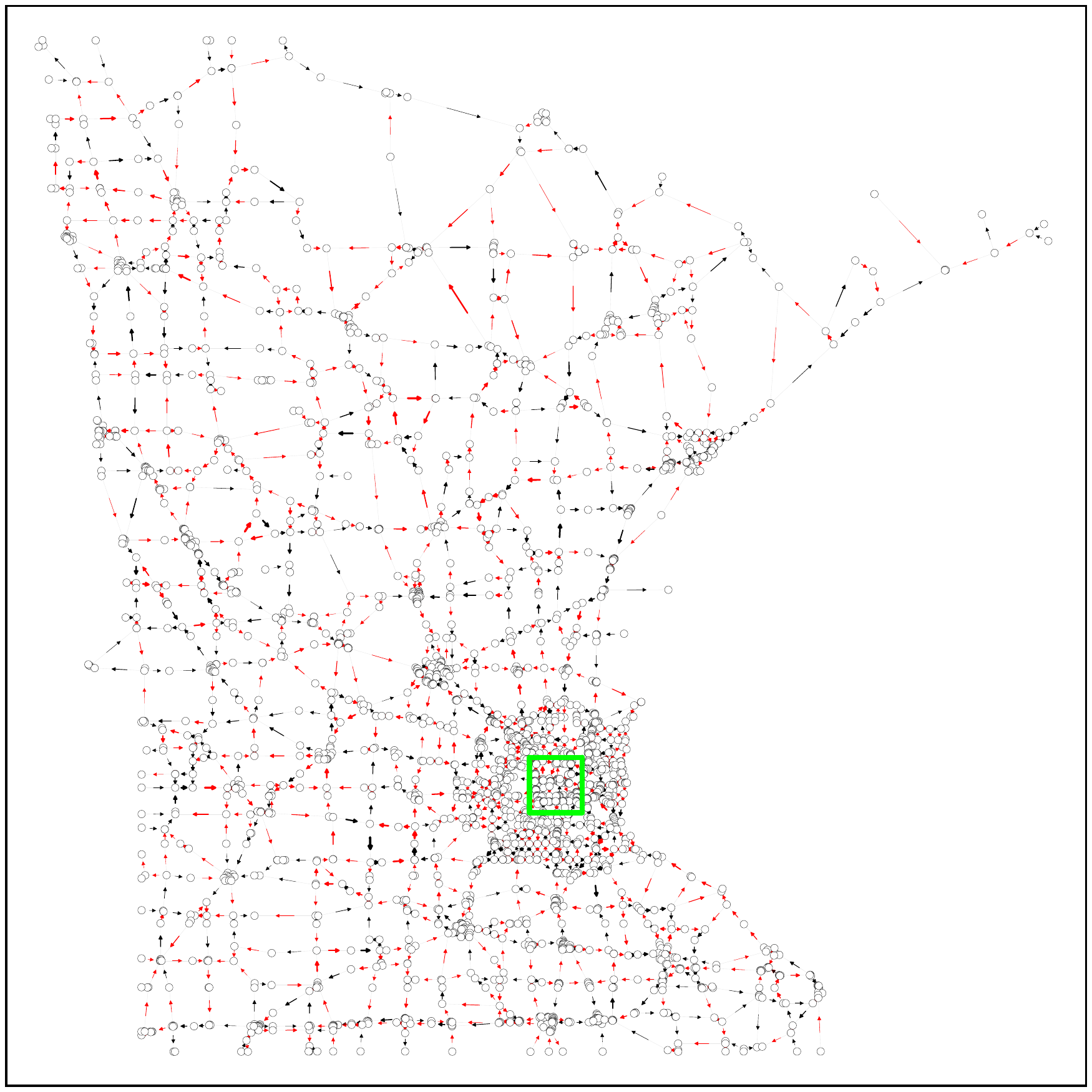}
\end{subfigure}
\hfill
\begin{subfigure}{0.46\linewidth}
\includegraphics[width=1.0\linewidth]{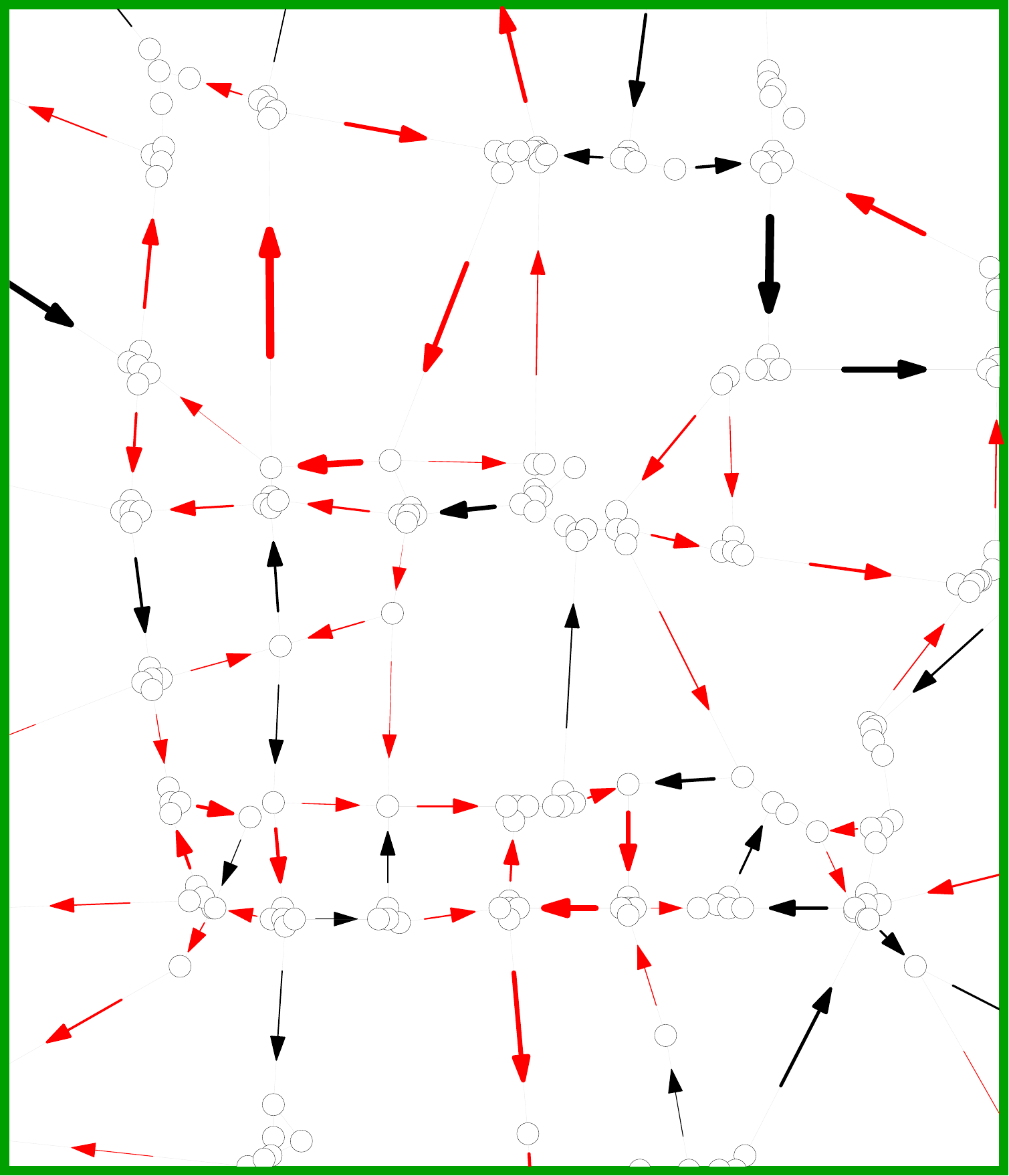}
\end{subfigure}
\caption{Synthetic traffic flow in Minnesota road network; 
$40\%$ of the edges are labeled and their flow is plotted in black.
The remaining red edge flows are inferred with our algorithm.
The width of each arrow is proportional to the magnitude of flow on the edge. The Pearson correlation coefficient between the inferred flows $\flowv^{*}$ and the ground truth $\hat{\flowv}$ is 0.956.
}
\label{fig:minnesota}
\end{figure}
\begin{figure}[t]
\centering
\includegraphics[width=1.0\linewidth]{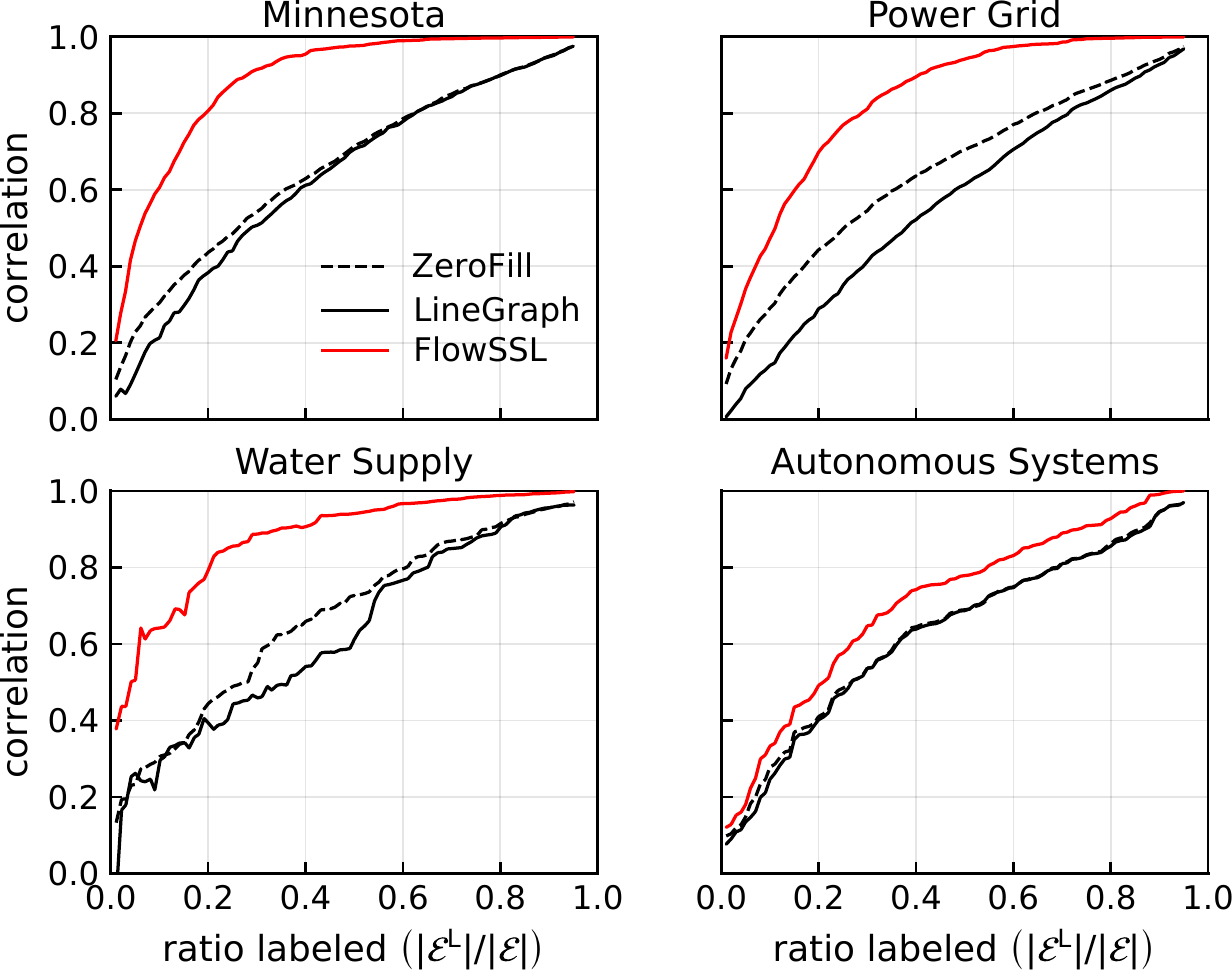}
\caption{Graph-based SSL for synthetic flows.
The plots show the correlation between the estimated flow vector $\flowv^{*}$ and 
the synthetic ground truth edge flows $\hat{\flowv}$ as a function of the ratio of labeled edges.
}
\label{fig:synthetic}
\end{figure}
\xhdr{Flow Network Setup}
In our first set of experiments, the network topology comes from real data, but we use
synthetic flows to demonstrate our method.
Later, we examine edge flows from real-world measurements.
We use the following four network topologies in our synthetic flow examples:
(1) The Minnesota road network where edges are roads and vertices are intersections ($n = 2642, m = 3303$)~\cite{Gleich-2009-thesis};
(2) The US power grid network from KONECT, where vertices are power stations or individual consumers and edges are transmission lines ($n = 4941, m = 6593$)~\cite{Kunegis_2013};
(3) The water irrigation network of Balerma city in Spain where vertices are water supplies or hydrants and edges are water pipes ($n = 447, m = 454$)~\cite{Reca_2006}; and
(4) An autonomous system network ($n = 520, m = 1280$)~\cite{Leskovec_2005}.

For each network, we first perform an SVD on its incidence matrix to get
the edge-space basis vectors $\mathbf{V}$.
The synthetic edge flows are then created by specifying the spectral coefficients $\mathbf{p}$, i.e., the mixture of these basis vectors.
Recall from \cref{subsec:theory} that the singular values associated with the basis vectors measure the magnitude of the divergence of each of these flow vectors.
To obtain a divergence-free flow, the spectral coefficients for all basis vectors $\mathbf{V}_\mathcal{R}$ spanning the cut space (associated with a nonzero singular value) should thus be set to zero.
However, to mimic the fact that  most real-world edge flows are not perfectly divergence-free, we do not set the spectral coefficients for the basis vectors in $\mathbf{V}_\mathcal{R}$ to zero.
Instead, we create synthetic flows with spectral coefficients for each basis vector (indexed by
$\alpha$) that are inversely proportional to the associated singular values $\sigma_{\alpha}$:
\begin{align}\label{eq:synth_spectral}
p_{\alpha} = \frac{b}{\sigma_{\alpha} + \epsilon},
\end{align}
where $b$ is a parameter that controls the overall magnitude of the edge flows and $\epsilon$ is a damping factor.
We choose $b = 0.02$, $\epsilon = 0.1$ in all examples shown in this paper.

\xhdr{Performance Measurement and Baselines}
Using the synthetic edge flows as our ground truth $\hat{\flowv}$, we conduct numerical experiments by selecting a fraction of edges \emph{uniformly at random} as the labeled edges $\mathcal{E}^{\rm L}$, and using our method to infer the edge flow on the unlabeled edges $\mathcal{E}^{\rm U}$.
To quantify the accuracy of the inferred edge flows, we use the Pearson correlation coefficient $\rho$ between the ground truth edge flows $\hat{\flowv}$ and the inferred edge flow $\flowv^{*}$.\footnote{Consistent results are obtained with other accuracy metrics, e.g., the relative $L^{2}$ error.}
The regulation parameter $\lambda$ in \cref{eq:objective_divergence_free} is $0.1$.
To illustrate the results, \cref{fig:minnesota} shows inferred traffic flows on the Minnesota road network.

We compare our algorithm against two baselines. 
First, the \emph{ZeroFill} baseline simply assigns $0$ edge flows to all unlabeled edges.
Second, the \emph{LineGraph} baseline uses a line-graph transformation of the network and then applies standard vertex-based SSL on the resulting graph.
More specifically, the original network is transformed into an undirected line-graph, where there is a vertex for each edge in the original network;
two vertices in the line-graph are connected if the corresponding two edges in the original network share a vertex.
Flow values (including sign) on the edges in the original network are the labels on the corresponding vertices in the transformed line-graph.
Unlabeled edge flows are then inferred with a classical vertex-based SSL algorithm on the line-graph~\cite{Zhu_2003}.

\xhdr{Results}
We test the performance of our algorithm \emph{FlowSSL} and the two baseline methods 
for different ratios of labeled edges (\cref{fig:synthetic}).
The LineGraph approach performs no better than ZeroFill. 
This should not be surprising, since the LineGraph approach does not interpret the sign of an edge flow as an orientation but simply as part of a numerical label.
On the other hand, our algorithm out-performs both baselines considerably.
\emph{FlowSSL} works especially well on the Minnesota road network and the Balerma water supply network, 
which have small average degree $\langle d \rangle$.
The intuitive reason is that the dimension of the cycle space is $m-n+1 = n (\langle d \rangle /2 -1) + 1$;
therefore, low-degree graphs have fewer degrees of freedom associated with a zero penalty in the objective 
\cref{eq:objective_divergence_free}.

\subsection{Learning Real-World Traffic Flows \label{subsec:real_world_traffic}}
\begin{figure}[t]
\centering
\includegraphics[width=1.0\linewidth]{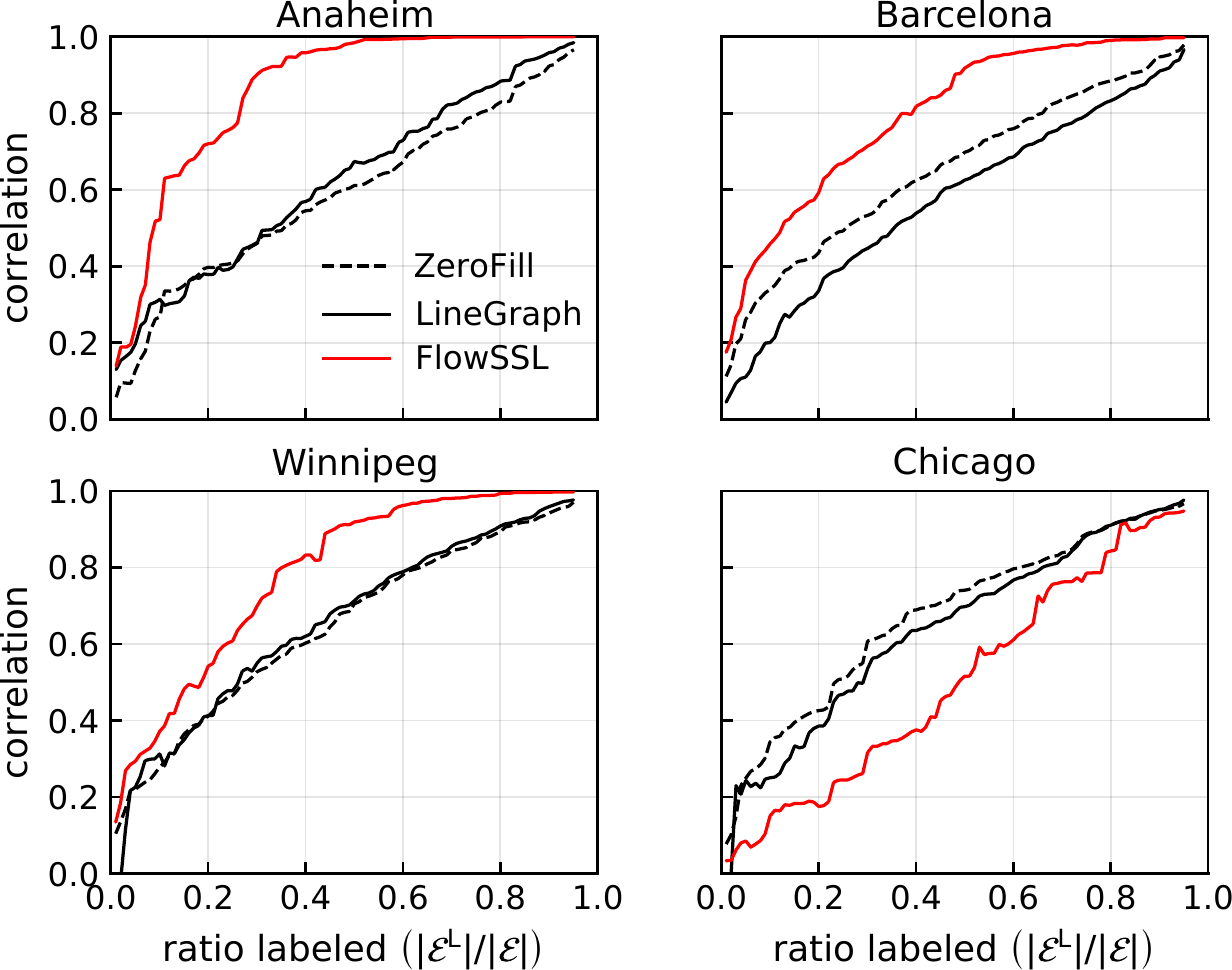}
\caption{Graph-based SSL for real-world traffic flows.
We plot the correlation between the estimated flow $\flowv^{*}$ and the ground truth $\hat{\flowv}$ measured in four transportation networks, as a function of the ratio of labeled edges.
Our FlowSSL outperforms the baselines except in Chicago,
which has a large flow component in the cut space
(\cref{fig:traffic_spectral}).
}

\label{fig:traffic}
\end{figure}
\begin{figure}[t]
\centering
\includegraphics[width=1.0\linewidth]{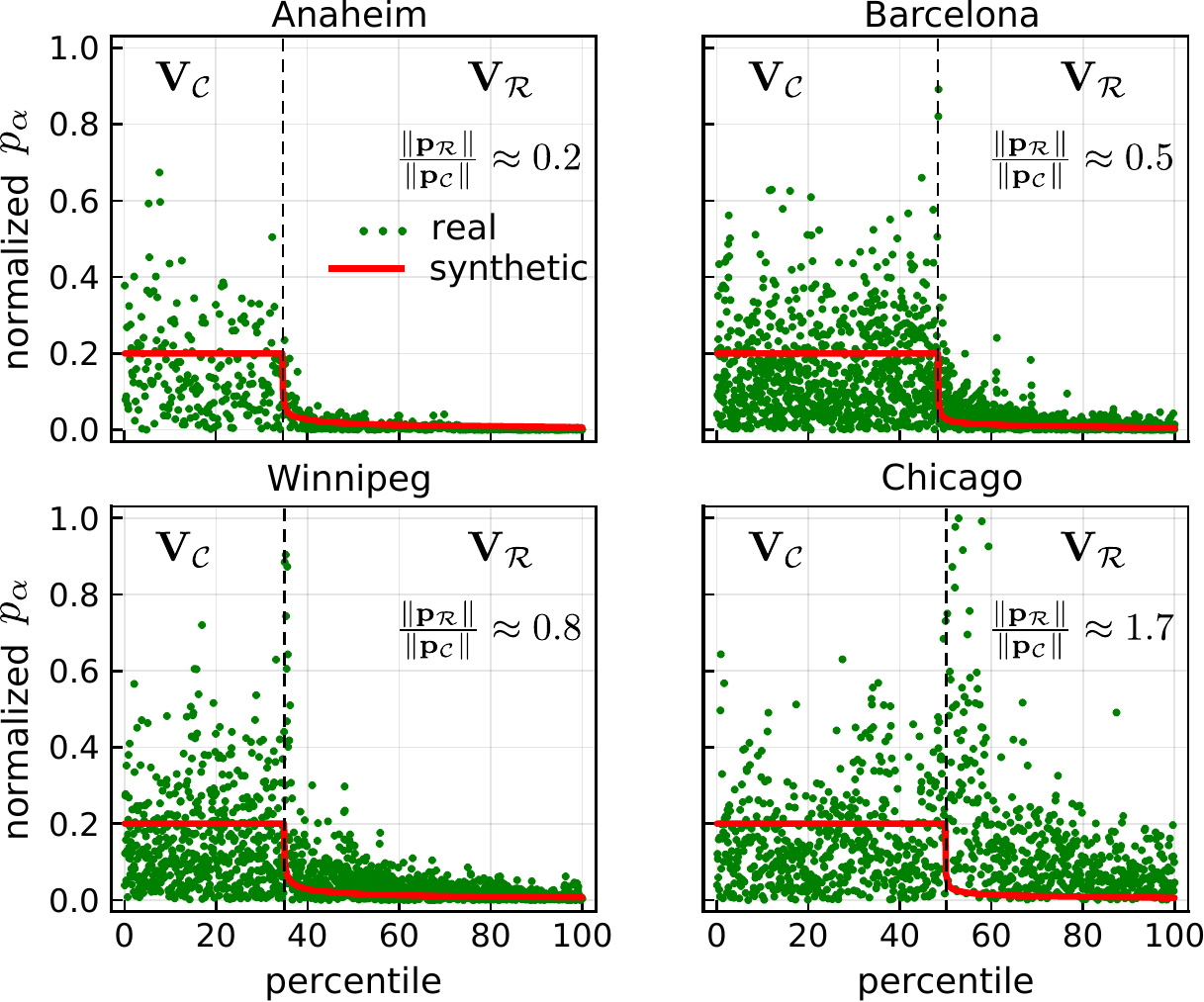}
\caption{%
The normalized spectral coefficients of real-world and synthetic edge flows (\cref{eq:synth_spectral}).
The spectral coefficients are ordered by increasing singular values and plotted as a function of the percentile ranking. 
The basis vectors in the cycle-space $\mathbf{V}_{\mathcal{C}}$ (lower percentile) all have zero singular values.
The real-world traffic flow's spectral coefficients are taken in absolute value and normalized so that the 
root-mean-square of $\mathbf{p}_{\mathcal{C}}$ (the spectral coefficients in cycle-space) equals 0.2. 
The different rates of decay in spectral coefficients leads to different performance of our method (\cref{fig:traffic}).
}
\label{fig:traffic_spectral}
\end{figure}
We now consider actual, measured flows rather than synthetically generated flows.
Accordingly, our assumption of approximately divergence-free flows may or may not be valid.
We consider transportation networks and associated measured traffic flows from four cities (Anaheim, Barcelona, Winnipeg, and Chicago)~\cite{Stabler_2016}.
To test our method, we repeat the same procedure we used for processing synthetic flows in
\cref{subsec:synthetic} with these real-world measured flows.
\Cref{fig:traffic} displays the results.

Our algorithm performs substantially better than the baselines on
three out of four transportation networks with real-world flows.
It performs worse than the baseline on the Chicago road network.
To understand this phenomenon, we compare the spectral coefficients of the real-world traffic flows in four cities with the ``damped-inverse'' synthetic spectral coefficients from \cref{eq:synth_spectral} (see \cref{fig:traffic_spectral}).
We immediately see that the real-world spectral coefficients $\hat{p}_{\alpha}$ do not significantly decay with increasing singular value in the Chicago network, in contrast to the other networks.
Formally, we measure how much the real-world edge flows deviate from our divergence-free assumption by computing the spectral ratio $\|\mathbf{p}_{\mathcal{R}}\| / \|\mathbf{p}_{\mathcal{C}}\|$ between the norms of the spectral coefficients in the cut-space and the cycle-space.
The ratios in the first three cities are all below $1.0$, indicating divergence-free flow is the dominating component.
However, the spectral ratio of traffic flows in Chicago is approximately $1.7$, which explains why our method fails to give accurate predictions.
Moreover, in the Chicago network, the spectral coefficients $\hat{p}_{\alpha}$ with the largest magnitude are actually concentrated in 
the cut-space basis vectors (with smallest singular values).
Later, we show how to improve our results by strategically choosing edges on which to measure flow, 
rather than selecting edges at random (\cref{subsec:active_learning}).

\subsection{Information Flow Networks}
Thus far we have focused on networks embedded in space, where the edges represent some media through which physical units flow between the vertices.
Now we demonstrate the applicability of our method to information networks by considering the edge-based SSL problem for predicting transitions among songs in a user's play sequence on Last.fm\footnote{This dataset is from \url{https://www.last.fm/}.}.
A user's play sequence consists of a chronologically ordered sequence of songs he/she listened to, and a song may repeatedly show up.
Taking the playlist of a user, we represent each unique song as a vertex in graph, and we connect two vertices if they are adjacent somewhere in the playlist.
The ground truth flows is constructed as follows: every time the user plays song $A$ followed by song $B$, add one unit of flow from $A$ to $B$.
We similarly constructed a flow network that records the transition among the artists of songs.
The flow networks constructed here are close to divergence-free, since every time a user transitions to a song or artist, he/she typically transition out by listening to other ones.
We used the same set of experiments to evaluate flow prediction for these networks (\cref{fig:flow_misc}).
Our method outperforms the baselines, despite the flows are not from physical systems.

\begin{figure}[t]
\includegraphics[width=1.0\linewidth]{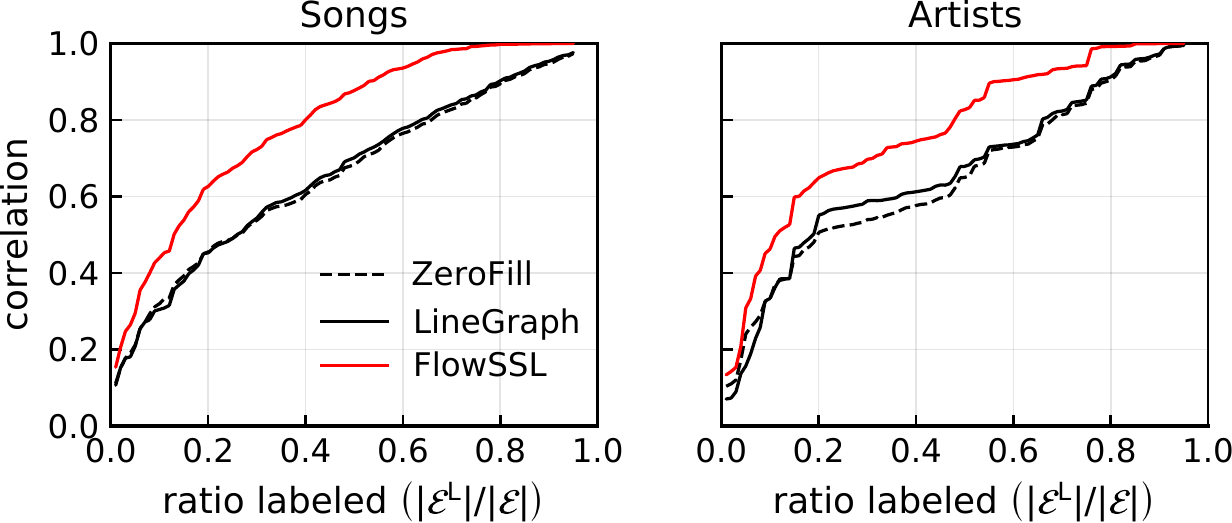}
\caption{Graph-based SSL for real-world flows among songs and artists in a music playlists.
The plots show the correlation between the estimated flow
vector $\flowv^{*}$ and the ground truth $\hat{\flowv}$ on flow networks
of songs and artists on a music streaming service. Even though
the flows are not physical, FlowSSL method still outperforms the baselines.
}
\label{fig:flow_misc}
\end{figure}

\section{Active Semi-Supervised Learning \label{sec:acl}}
We now focus on the problem of selecting the set of labeled edges
that is most helpful in determining the overall edge flows in a network.
While selecting the most informative set of labeled vertices has been well
studied in the context of vertex-based semi-supervised
learning~\cite{Gadde_2014,Guillory_2009}, active learning in the edge-space 
remains largely under-explored.
%
%
Traffic flows are typically monitored by road detectors, but installation and
maintenance costs often prevent the deployment of these detectors on the entire
transportation network.
In this scenario, solving the active learning problem in the edge-space enables us to choose an optimal set of roads to deploy sensors under a limited budget.

\subsection{Two active learning algorithms \label{subsec:active_learning}}

We develop two active semi-supervised learning algorithms for selecting edges to measure.
These algorithms improve the robustness of our method for learning edge flows.

\xhdr{Rank-revealing QR (RRQR)}
According to \cref{lma:reconstruction_error}, the upper bound of the
reconstruction error decreases as the smallest singular value of
$\mathbf{V}_{\mathcal{C}}^{\rm L}$ increases.
Therefore, one strategy for selecting $\mathcal{E}^{\rm L}$ is to choose
$m^{\rm L}$ rows from $\mathcal{V}_{0}$ that maximize the smallest singular value of
the resulting submatrix.
This problem is known as optimal column subset selection (maximum submatrix volume) and is NP-hard~\cite{Civril_2009}. 
However, a good heuristic is the rank revealing QR decomposition (RRQR)~\cite{Chan_1987}, which computes
\begin{align}
\mathbf{V}_{\mathcal{C}}^{\intercal}\ \Pi = Q 
\begin{bmatrix}
R_{1} & R_{2}
\end{bmatrix}.
\end{align}
Here, $\Pi$ is a permutation matrix that keeps $R_{1}$ well-conditioned.
Each column permutation in $\Pi$ corresponds to an edge, and the resulting edge set $\mathcal{E}^{\rm L}$ for active learning chooses the first $m^{\rm L}$ columns of $\Pi$.
This approach is mathematically similar to graph clustering algorithms that use RRQR to select representative vertices for cluster centers~\cite{Damle_2016}.
The computational cost of RRQR is $\mathcal{O}(m^{3})$.

\xhdr{Recursive Bisection (RB)}
In many real-world flow networks, there exist a global trend of flows across different cluster of vertices.
For example, traffic during morning rush hour flows from rural to urban regions
or electricity flows from industrial power plants to residential households.
%
The spectral projection of such global trends is concentrated on singular vectors $\mathbf{v}\in$
$\mathbf{V}_{\mathcal{R}}$ with small singular values corresponding to gradient flows
(e.g., $\mathbf{v}_{6}$ in \cref{fig:spectral}), as was the case with the Chicago traffic flows (\cref{fig:traffic_spectral}).

Building on this observation, our second active learning algorithm uses a heuristic recursive bisection (RB) approach for selecting labeled edges.\footnote{We call this algorithm recursive \emph{bisection} although it does not necessarily gives two clusters with the same number of vertices.}
%
%
The intuition behind this heuristic is that edge flows on bottleneck-edges, which partition a network, are able to capture global trends in the networks' flow pattern.
We start with an empty labeled set $\mathcal{E}^{\rm L}$, a target number of
labeled edges $m^{\rm L}$, and the whole graph as one single cluster.
Next, we recursively partition the largest cluster in the graph with spectral
clustering and add every edge that connects the two resulting clusters into
$\mathcal{E}^{\rm L}$, until we reach the target number of labeled edges.
Similar methods have been shown to be effective in semi-supervised active learning for
vertex labels~\cite{Guillory_2009}; in these cases, the graph is first
clustered, and then one vertex is selected from each cluster.
While any other graph partitioning algorithm could be used and greedy recursive bisection approaches 
can be sub-optimal~\cite{Simon_1998}, we find that this simple methods works well in practice on our datasets, and its iterative 
nature is convenient for selecting a target number of edges.
The computational cost of the recursive bisection algorithm is $\mathcal{O}(m \log n)$.

\subsection{Results}
\begin{figure}[t]
\centering
\includegraphics[width=1.0\linewidth]{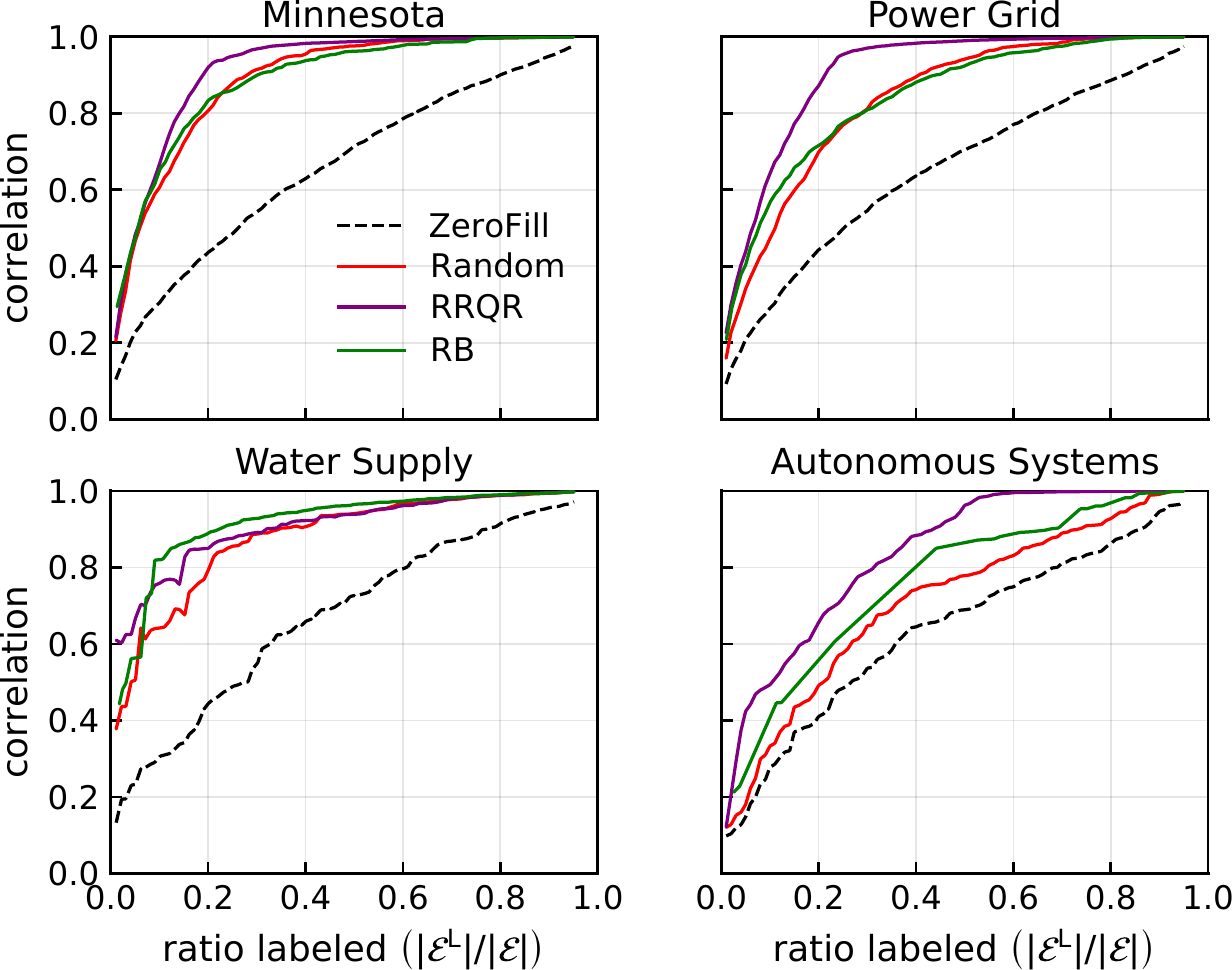}
\caption{Graph-based semi-supervised active learning for \emph{synthetic} flows. 
The plots show the Pearson correlation coefficients between the estimated flow
vector $\flowv^{*}$ and the synthetic ground truth edge flows $\hat{\flowv}$ as
a function of the ratio of labeled edges. Our rank-revealing QR (RRQR)
active learning performs well on synthetic datasets.
}
\label{fig:active_synthetic}
\end{figure}
\begin{figure}[t]
\centering
\includegraphics[width=1.0\linewidth]{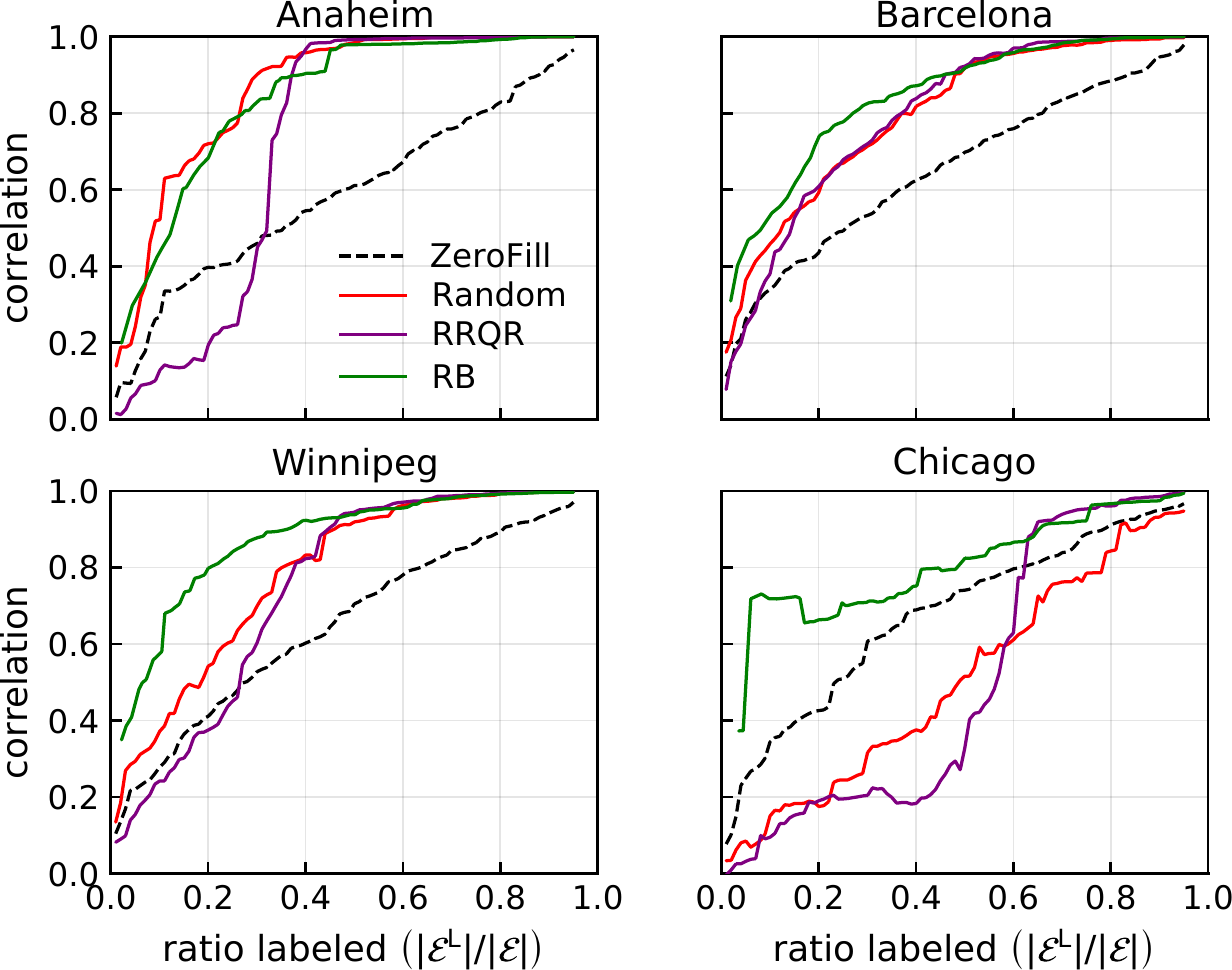}
\caption{Graph-based semi-supervised active learning for \emph{real-world} traffic flows.
The plots show the Pearson correlation coefficients between the estimated flow
vector $\flowv^{*}$ and the ground truth $\hat{\flowv}$ measured in four
transportation networks, as a function of the ratio of labeled edges.
With our Recursive Bisection (RB) active learning method to select edges, we now
perform better than the baseline (ZeroFill) on the Chicago traffic dataset
(cf.\ \cref{fig:traffic}).
}
\label{fig:active_real_world}
\end{figure}

\begin{figure}[t]
\centering
    \includegraphics[height=3.8cm]{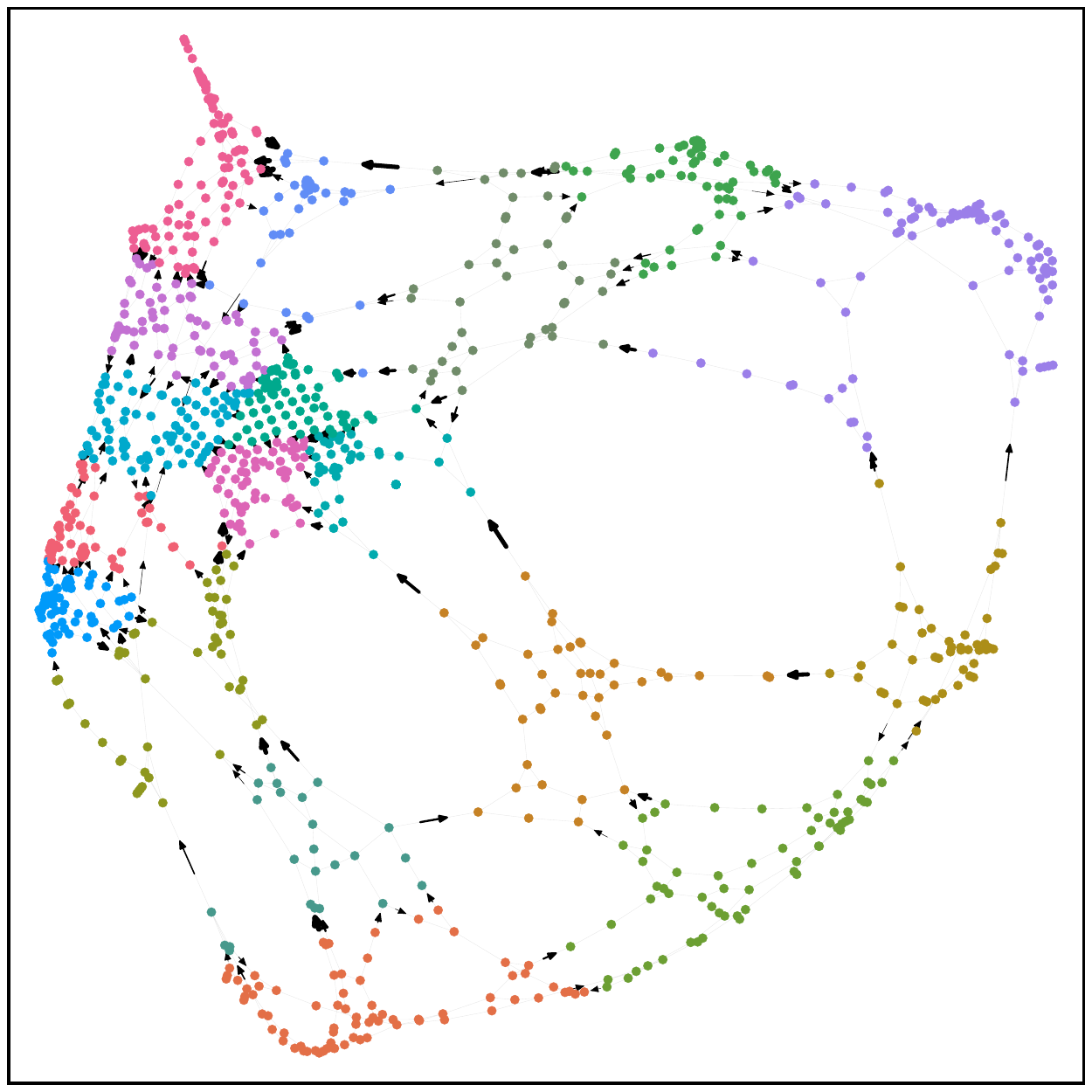}
    \hspace{1.7mm}
    \includegraphics[height=3.8cm]{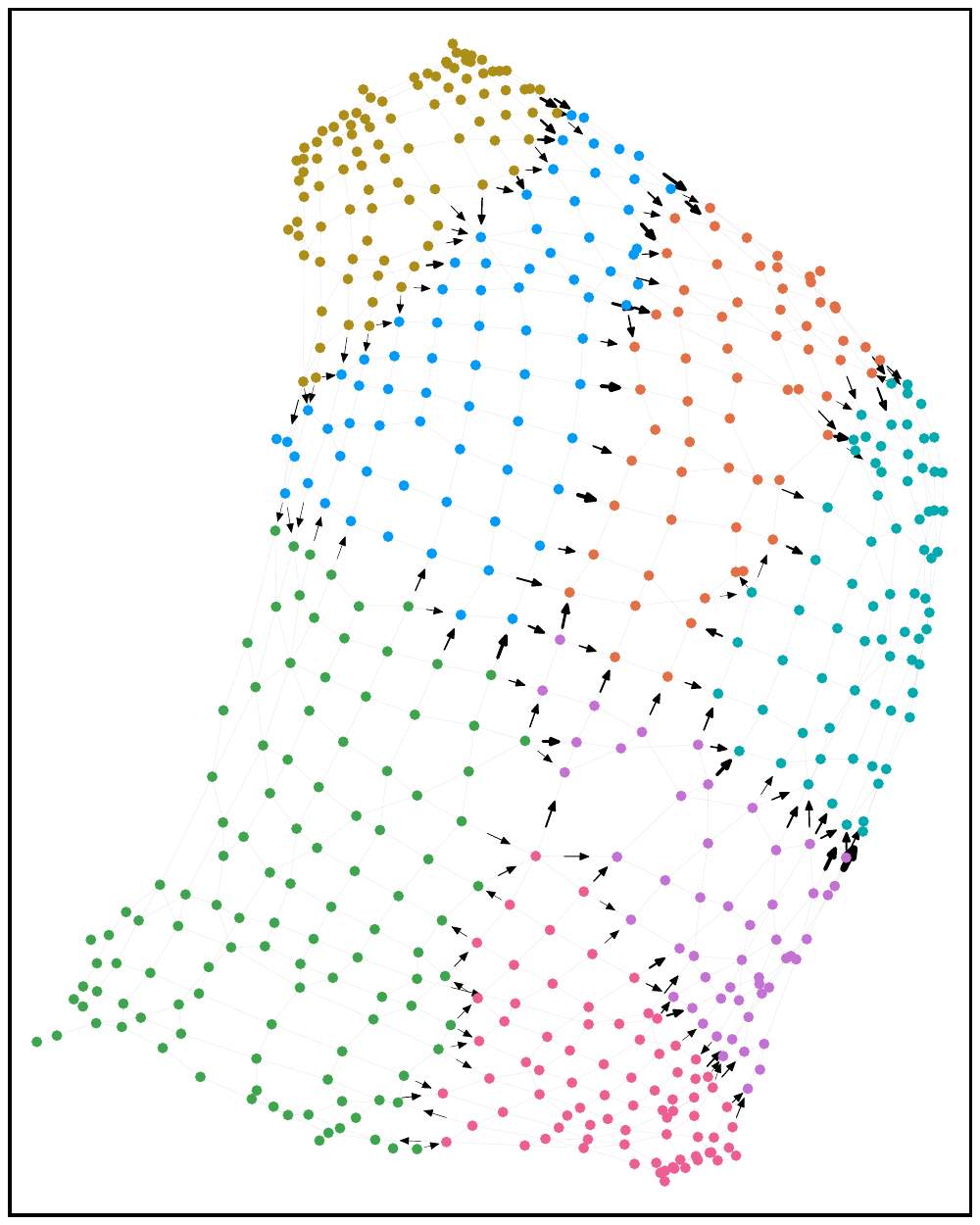}
\caption{The clusters discovered by our recursive bisection algorithm in Winnipeg (left) and Chicago (right) road networks, where the vertex coordinates are computed by spectral-embedding.
In each network, $10\%$ of the edges are selected as the labeled set $\mathcal{E}^{\rm L}$ and the ground truth edge flows on those edges are plotted as black arrows.
}
\label{fig:clustering}
\end{figure}

We repeat the experiments in \cref{sec:ssl_result} on traffic networks with labeled edges
$\mathcal{E}^{\rm L}$ selected by our RRQR and RB algorithms.
For comparison, the ZeroFill approach with randomly selected labeled edges is included as a baseline.
Our RRQR algorithm outperforms both recursive bisection and random selection for
networks with synthetic edge flows, where the divergence-free condition on vertices
approximately holds (\cref{fig:active_synthetic}).
However, for networks with real-world edges flows, RRQR performs
poorly, indicating that the divergence-free condition is too strong an assumption (\cref{fig:active_real_world}).
In this case, our RB method consistently outperforms the baselines, especially for small numbers of labels.

To provide additional intuition for the RB algorithm, we plot the selected labeled edges
and the final clusters in the Winnipeg and Chicago network when
allowing $10\%$ of edges to be labeled (\cref{fig:clustering}).
The correlation coefficients resulting from random edge selection and RB
active learning are $\rho_{\rm rand} = 0.371$ and $\rho_{\rm RB} = 0.580$ for the Winnipeg road network, respectively.
For the Chicago road network we obtain correlations of $\rho_{\rm rand} = 0.151$ and $\rho_{\rm RB} =0.718$.
Thus, the active learning strategy alleviates our prior issues with learning on the Chicago road network 
by strategically choosing where to measure edge flows.
%


\section{Extensions to cyclic-free flows}\label{sec:topology}
Thus far, our working assumption has been that edge flows are approximately divergence-free.
However, we might also be interested in the opposite scenario, 
if we study a system in which circular flows should \emph{not} be present.
Below we view our method through the lens of basic combinatorial Hodge theory. 
This viewpoint illuminates further how our previous method is a projection onto the space of divergence-free edge flows.
By projecting into the complementary subspace, we can therefore learn flows that are (approximately) cycle-free.
We highlight the utility of this idea through a problem of fair pricing of foreign currency exchange rates, 
where the cycle-free condition eliminates arbitrage opportunities.

\subsection{The Hodge decomposition}

Let $\flowv$ be a vector of edge flows.
The \emph{Hodge decomposition} provides an orthogonal decomposition of $\flowv$~\cite{Lim_2015,schaub2018random}:
\begin{align}
\underbrace{\flowv}_{\text{edge flow}} = 
\overbrace{\mathbf{B}^{\intercal}\mathbf{y}}^{\text{gradient flow}}
\oplus
\phantom{!!!}
\overbrace{
  \underbrace{\mathbf{C}\mathbf{w}}_{\text{curl flow}}
  \phantom{!!!}
  \oplus
  \underbrace{\mathbf{h}}_{\text{harmonic flow}}
}^{\text{divergence-free flow}}
\label{eq:hodge}
\end{align}
where the matrix $\mathbf{C} \in \mathbb{R}^{m \times o}$ (called the curl operator) maps edge flows to curl around a triangle,
\begin{align}
C_{ru} =
\begin{cases}
 1, & \mathcal{T}_{u} \equiv (i,j,k),\ \mathcal{E}_{r} \equiv (i,j),\ i < j < k \\
 1, & \mathcal{T}_{u} \equiv (i,j,k),\ \mathcal{E}_{r} \equiv (j,k),\ i < j < k \\
-1, & \mathcal{T}_{u} \equiv (i,j,k),\ \mathcal{E}_{r} \equiv (i,k),\ i < j < k \\
 0, & \text{otherwise}
\end{cases}
\label{eq:curl_matrix}
\end{align}
and $(\mathbf{B}^{\intercal}\mathbf{B} + \mathbf{C}\mathbf{C}^{\intercal})\mathbf{h} = 0$.
Here, the gradient flow component is zero if and only if
$\flowv$ is a divergence-free flow.
Thus far, we have focused on controlling the gradient flow;
specifically, the objective function in \cref{eq:objective_divergence_free}
penalizes a solution $\flowv$ where $\| \mathbf{B}\flowv \|$ is large.

We can alternatively look at other components of the flow given by the Hodge decomposition.
In \cref{eq:hodge}, the ``curl flow'' captures all flows that can be composed of flows around
triangles in the graph.
This component is zero if the sum of edge flows given by $\flowv$ around every triangle is 0. 
Combining \cref{eq:curl_matrix,eq:flow_function}, the curl of a flow on triangle $\mathcal{T}_{u} = (i, j, k)$ with oriented edges $(i, j)$, $(j, k)$, and $(i, k)$ is
$[\mathbf{C}^{\intercal}\flowv]_u = f(i, j) + f(j, k) - f(i, k) = f(i, j) + f(j, k) + f(k, i)$.

Finally, the vector $\mathbf{h}$ is called the harmonic flow and measures flows
that cannot be constructed from a linear combination of gradient and curl flows.
Projecting edge flows onto the space of gradient flows is the HodgeRank method for ranking with pairwise comparisons~\cite{Jiang-2010-HodgeRank}.
In the next section, we use $\| \mathbf{C}^{\intercal}\flowv \|$ as part of an objective function to alternatively learn flows that have small curl.

\subsection{An application to Arbitrage-Free Pricing \label{subsec:consistent_pricing}}
\begin{figure}[t]
\centering
\begin{minipage}[c]{0.6\columnwidth}
\includegraphics[width=1.0\columnwidth]{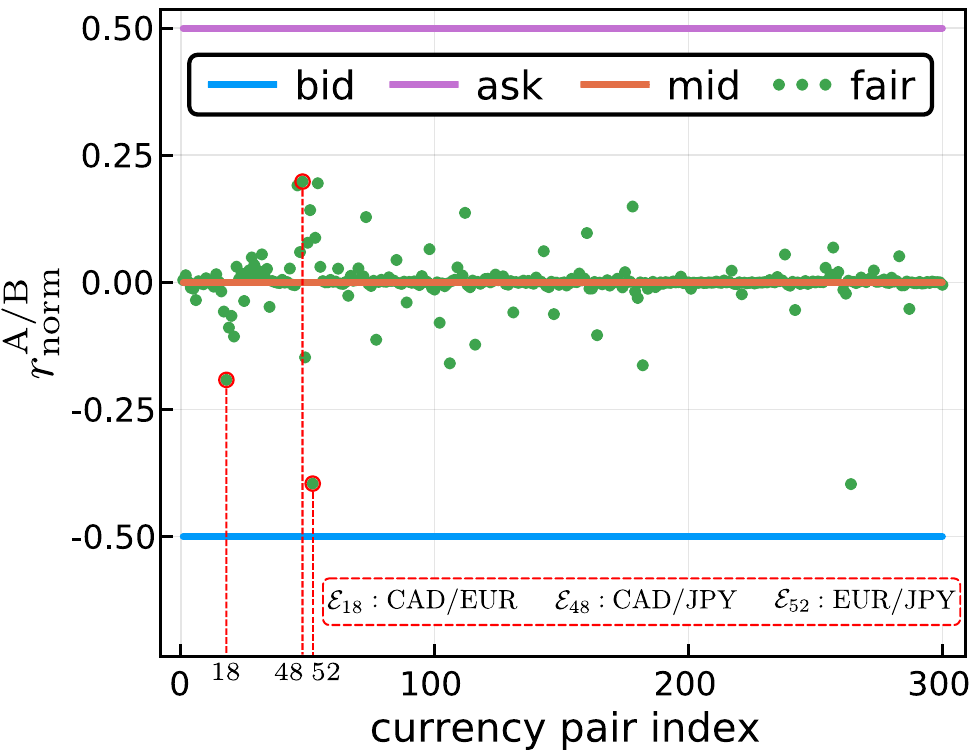}
\end{minipage}
\hfill
\begin{minipage}[c]{0.36\columnwidth}
\caption{Exchange rates and fair rates in a currency exchange market.
%
Rates are normalized so that bid and ask are 0.5 and -0.5.
We find fair trading prices by penalizing curl flow in the exchange.
\label{fig:fx}
}
\end{minipage}
\end{figure}

We demonstrate an application of edge-based learning with a
different type of flow constraint. 
In this case study, edge flows are currency exchange rates, where participants buy, sell, exchange, and speculate on foreign currencies.
Every pair of currencies has two exchange rates: the bid is the rate at which the market is prepared to buy a specific currency pair.
The ask is the rate at which the market is prepared to sell a specific currency pair.
There is also a third widely used exchange rate called the ``middle rate,'' which is
the average of the bid and ask, is often used as the price to facilitate a trade between currencies.
%
An important principle in an efficient market is the no-arbitrage condition,
which states that it is not possible to obtain net gains by a sequence of
currency conversions.
 Although the middle rates are widely accepted as a ``fair'' rate,
they do not always form an arbitrage-free market.
For example, in a dataset of exchange rates between the 25 most traded currencies at
2018/10/05 17:00 UTC~\cite{Oanda_2018}, the middle rates for CAD/EUR, EUR/JPY and JPY/CAD were
$0.671200$, $130.852$ and $0.0113876$, respectively. Therefore, a trader
successively executing these three transactions would yield
$1.000146$ CAD from a 1 CAD investment.

Here we show how to construct  a ``fair'' set of exchange rates that is arbitrage-free.
We first encode the exchange rates as a flow network.
Each currency is a vertex, and for each pair of currencies $A$ and
$B$ with exchange rate $r^{\rm A/B}$, we connect $A$ and $B$ with $\log(r^{\rm A/B})$
units of edge flow from $A$ to $B$, which ensures that $f(A, B) = -f(B, A)$.
The resulting exchange network is fully connected.
Under this setup, the arbitrage-free condition translates into requiring the
edge flows in the exchange network to be cycle-free.

We can constrain the edge flows on every triangle to sum to $0$ by the
curl-free condition $\|\mathbf{C}^{\intercal} \flowv\| = 0$.
Moreover, in a fully connected network, curl-free flows are cycle-free.
%
Thus, we propose to set fair rates by minimizing the curl over all triangles,
subject to the constraint that the fair price lies between the bid and ask prices:
\begin{align}
\flowv^{*} &= \arg \min_{\flowv} \|\mathbf{C}^{\intercal} \flowv\|^{2} + \lambda^{2} \cdot \|\flowv - \flowv^{\rm mid}\|^{2} \hspace{0.05in} \text{s.t.} \hspace{0.05in} \flowv^{\rm bid} \leq \flowv \leq \flowv^{\rm ask}.
\label{eq:objective_curl_free}
\end{align}
The second term in the objective ensures that the minimization problem is
not under-determined.
In our experiments, $\lambda = 1.0 \cdot 10^{-3}$ and we solve the convex
quadratic program with linear constraints using
the Gurobi solver. Unlike the middle rate, which
is computed only using the bid/ask rates of one particular currency pair, the
solution to the optimization problem above accounts for \emph{all} bid/ask rates
in the market to determine the fair exchange rate.
\Cref{fig:fx} shows the fair exchange rates, and the computed rates for
CAD/EUR, EUR/JPY and JPY/CAD are $0.671169$, $130.845$ and $0.0113871$, respectively,
removing the arbitrage opportunity.

\section{Related Work \label{sec:rw}}
The Laplacian $\mathbf{L}$ appears in many graph-based SSL algorithms to enforce smooth signals in the vertex space of the graph.
Gaussian Random Fields~\cite{Zhu_2003} and Laplacian Regulation~\cite{Belkin_2006} are two early examples, 
and there are several extensions~\cite{Wu_2012,Solomon_2014}. 
However, these all focus on learning vertex labels and, as we have seen, directly applying ideas from vertex-based SSL to learn edge flows on the line-graph does not perform well.
In the context of signal processing on graphs, there exist preliminary edge-space analysis~\cite{Schaub2018,barbarossa2016introduction,Zelazo_2011}, 
but semi-supervised or active learning are not considered.

In terms of active learning, several graph-based active semi-supervised algorithms have been designed for learning vertex labels, based on error bound minimization~\cite{Gu_2012}, 
submodular optimization~\cite{Guillory_2011}, or
variance minimization~\cite{Ji_2012}.
Graph sampling theory under spectral assumptions has also been an effective strategy~\cite{Gadde_2014}.
Similarly, in \cref{subsec:theory}, we give exact recovery conditions and derive
error bounds for our method assuming the spectral coefficients of the basis vectors
representing potential flows are approximately zero, which motivated our use of
RRQR for selecting representative edges.
There are also clustering heuristics for picking vertices~\cite{Guillory_2009};
in contrast, we use clustering to choose informative edges.


\section{Discussion \label{sec:discussion}}
We developed a graph-based semi-supervised learning method for edge flows.
Our method is based on imposing interpretable flow constraints to
reflect properties of the underlying systems.
These constraints may correspond to enforcing divergence-free flows in the case of flow-conserving transportation systems,
or non-cyclic flows as in the case of efficient markets.
Our method permits spectral analysis for deriving exact recovery condition and bounding reconstruction error, provided that the edge flows are indeed (nearly) divergence free.
%
%
%
On a number of synthetic and real-world problems, our method substantially outperforms competing baselines.
Furthermore, we explored two active semi-supervised learning algorithms for edge flows. 
The RRQR strategy works well for synthetic flows, while a recursive partitioning approach works well on real-world datasets.  
The latter result hints at additional structure in the real-world data that we can exploit for better algorithms.
%


\section*{Acknowledgements}
This research was supported in part by European Union's Horizon 2020 research and innovation programme under the Marie Sklodowska-Curie grant agreement No 702410; NSF Award DMS-1830274; and ARO Award W911NF-19-1-0057.

\bibliographystyle{ACM-Reference-Format}
\bibliography{main}
\setcitestyle{unsort}
\appendix
\clearpage

\section{Appendix \label{sec:supplementary}}
Here we provide some implementation details of our method to help readers 
reproduce and further understand the algorithms and experiments in this paper.
First, we present the solvers for learning edge flows in divergence-free and curl-free networks.
Then, we further discuss the active learning algorithms used for selecting informative edges.
All of the algorithms used in this paper are implemented in Julia 1.0.

\subsection{Graph-Based SSL for Edge Flows}
We created a Julia module \emph{NetworkOP} for processing edge flows in networks.
It contains a \emph{FlowNetwork} class which records the vertices, edges and triangles of a graph as three ordered dictionaries.\footnote{Although we considered unweighted
  graphs in this work, we choose ordered dictionaries over lists for future exploration of weighted graphs.}
The NetworkOP also provides many convenience functions, such as computing the incident matrix of a FlowNetwork object.
Such functions greatly simplify our implementations for learning unlabeled edge flows.
In this part, we assume the labeled edges are given. In \cref{subsec:supplementary_active_learning} we show algorithms for selecting edges.
\begin{figure}[H]
\scalebox{0.78}{\lstinputlisting{code/demo_divergence_free.jl}}
\caption{Code snippet for inferring edge flows in a divergence-free network by solving a least-squares problem.}
\label{fig:supplementary_divergence_free}
\end{figure}

\xhdr{Learning flows in divergence-free networks}
Our algorithm for solving \cref{eq:objective_divergence_free} is presented in \cref{fig:supplementary_divergence_free}.
This algorithm takes three arguments as input:
(1) $\mathbf{A} \in \mathbb{R}^{n \times n}$ is the adjacency matrix of a graph;
(2) $\mathbf{F} \in \mathbb{R}^{n \times n}$ is the matrix containing ground truth edge flows, where $\mathrm{F}_{ij} = f(i,j) = -\mathrm{F}_{ji}$; and
(3) $\mathbf{IdU} \in \mathbb{N}^{m^{\rm U}}$ is the edge indices of unlabeled edges.
The edge flow matrix $\mathbf{F}$ is transformed into an edge flow vector $\hat{\flowv}$ with the \texttt{mat2vec} function (line 17).
Following the formulation in \cref{eq:least_square}, we define the incidence matrix $\mathbf{B}$ (line 21), the expansion operator $\mathbf{\Phi}$ and its transpose (line 23,24), then we assemble them into a linear map (line 27-31) as the main operator in the least-squares problem.
Finally, we solve the least-squares problem with an iterative LSQR solver from the \texttt{IterativeSolvers.jl} package.

\xhdr{Learning flows in cycle-free networks}
\begin{figure}[t]
\raggedright
\scalebox{0.78}{\lstinputlisting{code/demo_curl_free.jl}}
\caption{Code snippet for fair pricing in a arbitrage-free foreign exchange network by solving a quadratic program with linear constraints (QPLC).}
\label{fig:supplementary_cycle_free}
\end{figure}
Our algorithm for computing the fair rate in a foreign exchange market is given in \cref{fig:supplementary_cycle_free}.
It takes as input the (fully connected) adjacency matrix $\mathbf{A}$ as well as the flow vectors $\mathbf{bid}, \mathbf{mid}, \mathbf{ask} \in \mathbb{R}^{m}$ representing the corresponding logarithmic exchange rates between currencies.
Then we use the \texttt{JuMP.jl} package to set up the optimization problem in \cref{eq:objective_curl_free}.
The linear constraints and quadratic objective function are specified in line 22 and 24-27 respectively.
Finally we solve the QPLC problem with the \texttt{Gurobi.jl} package (line 29).

\subsection{Active Learning Strategies \label{subsec:supplementary_active_learning}}
Now we look at the implementation of the two active learning algorithms.
Given the adjacency matrix $\mathbf{A}$ as input, those active learning algorithms output the selected indices $\mathbf{IdL} \in \mathbb{N}^{m^{\rm L}}$ of edges to be labeled.

\xhdr{RRQR algorithm}
\begin{figure}[t]
\scalebox{0.78}{\lstinputlisting{code/demo_rrqr.jl}}
\caption{Code snippet of the RRQR active learning algorithm for selecting informative edges.}
\label{fig:supplementary_rrqr}
\end{figure}
We present our RRQR active learning algorithm in \cref{fig:supplementary_rrqr}.
First, we compute an orthonormal basis $\mathbf{V}_{\mathcal{C}}$ for the cycle-space $\mathcal{C} = \ker(\mathbf{B})$ representing cyclic edge flows (line 16).
Then, we perform pivoted QR decomposition on the rows of $\mathbf{V}_{\mathcal{C}}$ (line 19), and the edge indices for labeled edges are given by the first $m^{\rm L}$ permutations (line 21).

\xhdr{Recursive bisection algorithm}
\begin{figure}[t]
\scalebox{0.78}{\lstinputlisting{code/demo_rb.jl}}
\caption{Code snippet of the recursive bisection active learning algorithm for selecting informative edges.}
\label{fig:supplementary_rb}
\end{figure}
We present our recursive bisection active learning algorithm in \cref{fig:supplementary_rb}.
This algorithm first uses a spectral embedding to compute the vertex coordinates (line 15).
Then it repeatedly chooses the largest cluster in the graph (line 24-26), uses
the k-means (Lloyd's) algorithm to divide the chosen cluster into two (line 27-31),
and adds the edges that connect the two resulting clusters into the labeled
edges indices (line 33-40).
Note that the k-mean algorithm sometimes fails due to bad initial cluster
centers or vertices with same embedded coordinates; however, we omit the code
for dealing with those corner cases here due to limited space.

The complete implementation of all of our algorithms can be found at \url{https://github.com/000Justin000/ssl_edge.git}.

\end{document}